\documentclass[
    sigconf = true,      
    screen = true,       
]{acmart}

\makeatletter
\def\NAT@spacechar{~}
\makeatother

\usepackage{todonotes}

\usepackage{bm}
\usepackage{upgreek}
\usepackage{mathtools}
\usepackage{xparse}
\usepackage{dsfont}
\usepackage[ruled, vlined, linesnumbered]{algorithm2e}
\usepackage[nameinlink]{cleveref}

\let\originalleft\left
\let\originalright\right
\renewcommand{\left}{\mathopen{}\mathclose\bgroup\originalleft}
\renewcommand{\right}{\aftergroup\egroup\originalright}

\ExplSyntaxOn
\DeclareExpandableDocumentCommand{\IfNoValueOrEmptyTF}{mmm}{%
    \IfNoValueTF{#1}{#2}{%
        \tl_if_empty:nTF {#1} {#2} {#3}%
    }%
}
\ExplSyntaxOff


\allowdisplaybreaks

\crefname{ineq}{inequality}{inequalities}
\creflabelformat{ineq}{#2{\upshape(#1)}#3}

\NewDocumentCommand{\probabilisticOperator}{m m}{%
    \NewDocumentCommand{#1}{m o o}{%
        #2%
        \IfNoValueOrEmptyTF{##3}{\IfNoValueTF{##3}{\left[}{[}}{\mathopen{##3[}}%
            ##1%
            \IfNoValueOrEmptyTF{##2}{}{%
                \,\IfNoValueOrEmptyTF{##3}{\IfNoValueTF{##3}{\middle|}{|}}{##3|}\,
                ##2
            }%
        \IfNoValueOrEmptyTF{##3}{\IfNoValueTF{##3}{\right]}{]}}{\mathclose{##3]}}%
    }
}
\NewDocumentCommand{\probabilisticOperatorRenew}{m m}{%
    \RenewDocumentCommand{#1}{m o o}{%
        #2%
        \IfNoValueOrEmptyTF{##3}{\IfNoValueTF{##3}{\left[}{[}}{\mathopen{##3[}}%
            ##1%
            \IfNoValueOrEmptyTF{##2}{}{%
                \,\IfNoValueOrEmptyTF{##3}{\IfNoValueTF{##3}{\middle|}{|}}{##3|}\,
                ##2
            }%
        \IfNoValueOrEmptyTF{##3}{\IfNoValueTF{##3}{\right]}{]}}{\mathclose{##3]}}%
    }
}

\NewDocumentCommand{\defineFunction}{s m m O{(} O{)}}{%
    \IfBooleanTF{#1}{%
        \NewDocumentCommand{#2}{o o}{%
            #3%
            \IfValueT{##1}{%
                \IfNoValueOrEmptyTF{##2}{\IfNoValueTF{##2}{\left#4}{#4}}{\mathopen{##2#4}}%
                ##1%
                \IfNoValueOrEmptyTF{##2}{\IfNoValueTF{##2}{\right#5}{#5}}{\mathclose{##2#5}}%
            }%
        }%
    }{%
        \NewDocumentCommand{#2}{m o}{%
            #3%
            \IfNoValueOrEmptyTF{##2}{\IfNoValueTF{##2}{\left#4}{#4}}{\mathopen{##2#4}}%
            ##1%
            \IfNoValueOrEmptyTF{##2}{\IfNoValueTF{##2}{\right#5}{#5}}{\mathclose{##2#5}}%
        }%
    }
}
\NewDocumentCommand{\redefineFunction}{m m O{(} O{)}}{%
    \RenewDocumentCommand{#1}{o o}{%
        #2%
        \IfValueT{##1}{%
            \IfNoValueOrEmptyTF{##2}{\IfNoValueTF{##2}{\left#3}{#3}}{\mathopen{##2#3}}%
            ##1%
            \IfNoValueOrEmptyTF{##2}{\IfNoValueTF{##2}{\right#4}{#4}}{\mathclose{##2#4}}%
        }
    }
}

\NewDocumentCommand{\N}{o}{%
    \mathds{N}\IfValueT{#1}{_{#1}}%
}
\NewDocumentCommand{\R}{o}{%
    \mathds{R}\IfValueT{#1}{_{#1}}%
}
\NewDocumentCommand{\eulerE}{o}{%
    \mathrm{e}\IfValueT{#1}{^{#1}}%
}

\defineFunction{\bigO}{\mathrm{O}}
\defineFunction{\bigTheta}{\Theta}
\defineFunction{\bigOmega}{\Omega}
\defineFunction{\smallO}{\mathrm{o}}
\defineFunction{\smallOmega}{\upomega}

\defineFunction{\ind}{\bm{1}}[\{][\}]
\defineFunction*{\fitness}{f}
\defineFunction*{\hammingDistance}{d_{\mathrm{H}}}
\defineFunction{\set}{}[\{][\}]
\NewDocumentCommand{\restrictedSet}{m m o}{%
\set{#1}[#3]_{#2}%
}
\NewDocumentCommand{\restrictedToN}{m o}{%
\restrictedSet{#1}{[n]}[#2]%
}


\newcommand{\dsdRLS}{d-SD RLS\xspace}
\newcommand{\flexEA}{flex-EA\xspace}
\defineFunction*{\oneMax}{\mathrm{OM}}
\newcommand{\oneMaxName}{\textsc{OneMax}\xspace}
\newcommand{\leadingOnesName}{\textsc{LeadingOnes}\xspace}
\newcommand{\shiftedOneMaxParameter}{k}
\defineFunction*{\shiftedOneMax}{\mathrm{OM}_{\shiftedOneMaxParameter}}
\newcommand*{\runTime}{T}
\newcommand{\jumpParameter}{k}
\newcommand{\stagnationWhp}{R}
\newcommand{\powerLawExponent}{\beta}
\defineFunction*{\powerLawNormalization}{N}
\defineFunction*{\onePlusOneEA}{$(1 + 1)$~EA\xspace}

\newcommand*{\numberOfFitnessLevels}{L}
\newcommand*{\improvementProbability}[1]{q_{#1}}

\newcommand*{\rateOneActive}{D}

\newcommand*{\parentRaw}{\bm{x}}
\newcommand*{\offspringRaw}{\bm{y}}
\NewDocumentCommand{\parent}{m o}{%
    \parentRaw^{(#1)}\IfNoValueF{#2}{_{#2}}
}
\NewDocumentCommand{\offspring}{m o}{%
    \offspringRaw^{(#1)}\IfNoValueF{#2}{_{#2}}
}
\NewDocumentCommand{\individual}{o}{%
    \bm{x}\IfNoValueF{#1}{_{#1}}%
}
\NewDocumentCommand{\individualOther}{o}{%
    \bm{y}\IfNoValueF{#1}{_{#1}}%
}
\NewDocumentCommand{\individualDifferent}{o}{%
    \bm{z}\IfNoValueF{#1}{_{#1}}%
}

\NewDocumentCommand{\freqs}{m o}{%
    \bm{p}^{(#1)}\IfNoValueF{#2}{_{#2}}
}

\NewDocumentCommand{\anytimeFreqs}{o}{%
    \bm{p}\IfNoValueF{#1}{_{#1}}
}

\NewDocumentCommand{\update}{o}{%
    \bm{u}\IfNoValueF{#1}{_{#1}}
}

\NewDocumentCommand{\lowerBounds}{o}{%
    \bm{\ell}\IfNoValueF{#1}{_{#1}}
}

\NewDocumentCommand{\preferredRates}{o}{%
    A\IfNoValueF{#1}{^{(#1)}}
}

\NewDocumentCommand{\counts}{m o}{%
    \bm{c}^{(#1)}\IfNoValueF{#2}{_{#2}}
}

\NewDocumentCommand{\anytimeCounts}{o}{%
    \bm{c}\IfNoValueF{#1}{_{#1}}
}

\NewDocumentCommand{\timeToNext}{o}{%
    \bm{C}\IfNoValueF{#1}{_{#1}}
}

\NewDocumentCommand{\globalCounter}{o}{%
    g\IfNoValueF{#1}{^{(#1)}}
}

\NewDocumentCommand{\globalBound}{o}{%
    G\IfNoValueF{#1}{^{(#1)}}
}

\defineFunction*{\weight}{w}


\defineFunction*{\cumulativeWeight}{W}

\NewDocumentCommand{\equilibrium}{O{\update}}{%
    q^{(#1)}
}

\probabilisticOperatorRenew{\Pr}{\mathrm{Pr}}
\probabilisticOperator{\E}{\mathrm{E}}
\probabilisticOperator{\Var}{\mathrm{Var}}


\newcommand{\jump}{\textsc{Jump}\xspace}
\newcommand{\ie}{i.\,e.\xspace}
\newcommand{\eg}{e.\,g.\xspace}
\newcommand{\whp}{w.\,h.\,p.\xspace}

\newcommand{\card}[1]{\lvert #1\rvert}
\newcommand{\sdrlsm}{\sdrlsss}
\newcommand{\fastea}{Fast (1+1)~EA\xspace}
\newcommand{\rlsonetwo}{\text{RLS}$^{1,2}$\xspace}
\newcommand{\xopt}{x_{\mathrm{opt}}\xspace}

\defineFunction{\indicator}{\mathds{1}}[\{][\}] 
\newcommand{\TwoFrequencies}{\textsc{TwoRates}\xspace}

\newcommand{\sdrlss}{SD-RLS$^{\text{r}}$\xspace}
\newcommand{\sdrlsss}{SD-RLS$^{\text{m}}$\xspace}
\newcommand{\fsdrls}{SD-FEA\xspace}
\newenvironment{proofof}[1]{\begin{proof}[Proof of~#1]}{\end{proof}}


\copyrightyear{2024}
\acmYear{2024}
\setcopyright{rightsretained}
\acmConference[GECCO '24]{Genetic and Evolutionary Computation
Conference}{July 14--18, 2024}{Melbourne, VIC, Australia}
\acmBooktitle{Genetic and Evolutionary Computation Conference (GECCO '24),
July 14--18, 2024, Melbourne, VIC, Australia}
\acmDOI{10.1145/3638529.3654076}
\acmISBN{979-8-4007-0494-9/24/07}

\acmPrice{15.00}


\begin{document}
\pagenumbering{arabic}
\title{A Flexible Evolutionary Algorithm With Dynamic Mutation Rate Archive}

\author{Martin~S. Krejca}
\orcid{0000-0002-1765-1219}
\affiliation{%
    \institution{Laboratoire d'Informatique (LIX), CNRS, École Polytechnique, Institut Polytechnique de Paris}
    \city{Palaiseau}
    \country{France}
}
\email{martin.krejca@polytechnique.edu}

\author{Carsten Witt}
\orcid{0000-0002-6105-7700}
\affiliation{%
    \institution{Technical University of Denmark}
    \city{Lyngby}
    \country{Denmark}
}
\email{cawi@dtu.dk}


\begin{abstract}
    We propose a new, flexible approach for dynamically
    maintaining successful mutation rates in evolutionary algorithms using $k$-bit flip mutations. The algorithm adds successful
    mutation rates to an archive of promising rates
    that are favored in subsequent steps. Rates expire when
    their number of unsuccessful trials has exceeded a threshold, while  rates currently
    not present in the
    archive can
    enter it in two ways: (i) via
    user-defined minimum selection probabilities for
    rates combined with a successful step or (ii) via a stagnation detection
    mechanism increasing the value for a promising rate
    after the current bit-flip neighborhood has
    been explored with high probability. For
    the minimum selection probabilities, we suggest
    different options, including heavy-tailed
    distributions.

    We conduct rigorous runtime analysis of the flexible evolutionary
    algorithm on the  \oneMaxName and \jump functions, on general unimodal functions, on minimum spanning trees, and on
    a class of
    hurdle-like functions with varying hurdle width that benefit particularly
    from the
    archive of promising mutation rates. In all cases,
    the runtime bounds are close to or even outperform
    the best known results for both
    stagnation detection and
    heavy-tailed mutations.
\end{abstract}

%
%
\begin{CCSXML}
    <ccs2012>
        <concept>
            <concept_id>10003752.10010070.10011796</concept_id>
            <concept_desc>Theory of computation~Theory of randomized search heuristics</concept_desc>
            <concept_significance>500</concept_significance>
        </concept>
        <concept>
            <concept_id>10002950.10003714.10003716.10011136.10011797.10011799</concept_id>
            <concept_desc>Mathematics of computing~Evolutionary algorithms</concept_desc>
            <concept_significance>500</concept_significance>
        </concept>
    </ccs2012>
\end{CCSXML}

\ccsdesc[500]{Theory of computation~Theory of randomized search heuristics}
\ccsdesc[500]{Mathematics of computing~Evolutionary algorithms}

\keywords{Theory, runtime analysis, evolutionary algorithm, parameter adaptation, archive}

\maketitle

\section{Introduction}
\label{sec:introduction}

The success of evolutionary algorithms (EAs) depends crucially on their parametrization~\cite{DoerrN20}.
At their core is the careful balance of exploration and exploitation.
On the one hand, an EA should find a, potentially locally, optimal solution if it is close by.
On the other hand, it should escape local optima and keep finding better solutions within reasonable time.

To achieve this behavior, an EA needs to be flexible enough to consider solutions that are at different distances from its current solutions.
As there are many ways to make such a choice, this has led to various approaches that have been studied theoretically~\cite{DoerrN20}.
Some approaches consider a static distribution over the search radius (known as \emph{mutation rate}).
Others adjust the mutation rate dynamically during the run.
Yet other approaches allow to accept worse solutions in order to search more locally for alternative, better solutions.
And some approaches do not work with multi-sets of solutions but with distributions over solutions instead.
We discuss some prominent ideas in more detail in \Cref{sec:relatedWork}.

To the best of our knowledge, almost all of these approaches base their choice for the next mutation rate on ad hoc knowledge, not exploiting the information from previous iterations extensively.
This means that recurring structures in the search space are not necessarily well exploited.
In contrast, approaches that do attempt to incorporate knowledge from previous iterations are hard to analyze and
there are much fewer runtime results.

We propose a flexible EA (the \emph{\flexEA}, \Cref{alg:flexibleEA}), which aims to satisfy both needs.
The \flexEA maintains an archive of mutation rates that were successful in the past, called \emph{active}.
In each iteration, it picks a mutation rate based primarily on the active rates, but it still considers all other rates, based on user-defined probabilities.
Successful rates are added to the archive while unsuccessful ones are discarded after a certain amount of time, defined by the user.
If no rate is active, the \flexEA picks the successor of the last-active rate, reminiscent of stagnation detection~\cite{RajabiW23StagnationDetection}.
If the archive becomes too full without seeing any progress, it is reset.

The \flexEA allows many of its parameters to be adjusted in order to incorporate problem-specific knowledge.
However, in the absence of such knowledge, we recommend to choose the probabilities of inactive rates according to a heavy-tailed distribution~-- a common, general-purpose choice, first suggested by \citet{DoerrLMNGECCO17}, also known as a \emph{power-law}~-- and to choose the times for discarding active rates in the same manner as done in stagnation detection~\cite{RajabiW23StagnationDetection}.
This results in an essentially parameter-less algorithm. 

We show the efficiency of the \flexEA by analyzing it on a diverse set of problems, namely, on unimodal and on jump functions, on the minimum spanning tree problem, and on hurdle-like functions with varying hurdle width.
For all of these problems, the performance of the \flexEA is close to or even better than the best known results for stagnation detection and heavy-tailed mutation.

For unimodal functions (\Cref{sec:unimodalFunctions}), the progress of the \flexEA is mainly determined by its probability to improve solutions locally (\Cref{thm:runTimeOnUnimodal}).
For \oneMaxName and \leadingOnesName of problem size~$n$ and with the recommended parametrization, this results in the common bounds of $\bigO{n \log(n)}$ and $\bigO{n^2}$, respectively.

For the $\jump$ benchmark of problem size~$n$ and gap size~$\jumpParameter$ (\Cref{sec:jump}), the recommended parametrization results for $\jumpParameter = \smallO{n/\log n}$ in a bound of $\big(2 + \smallO{1}\big)\binom{n}{\jumpParameter}$ (\Cref{cor:runTimeOnJumpPowerLaw}), which is optimal for unbiased algorithms with unary mutation, up to a factor of $2 + \smallO{1}$~\cite{DoerrR23}.
For many other values of~$\jumpParameter$, the result is a product of~$\binom{n}{\jumpParameter}$ and a factor that depends at most quadratically on~$\jumpParameter$.

For the combinatorial problem of finding a minimum spanning tree on a graph with~$m$ edges (\Cref{sec:mst}), we rely on the flexibility 
to incorporate problem-specific knowledge, and we adjust the parameters such that the algorithm mainly makes progress by flipping one or two edges into or out of solutions.
With this choice, the \flexEA has an expected runtime of $\bigl(1 + \smallO{1}\bigr)\bigl(m^2 \log(m)\bigr)$ (\Cref{theo:mst}), which matches the best known bound~\cite{RajabiWittGECCO21}.

Last, to showcase the benefit of an archive, we introduce a problem that features many local optima separated by gaps of alternating widths (\Cref{sec:twoRates}).
Once the \flexEA determines the necessary rates, it maintains them successfully until the global optimum is found.
It does so in an expected time that is faster by a poly-logarithmic factor than heavy-tailed mutation and faster by a super-polynomial factor than stagnation detection (\Cref{theo:twofreqcs}).
Both, heavy-tailed mutation and stagnation detection are slowed down by requiring to find the correct mutation rate repeatedly, although heavy-tailed mutation does this far more efficiently than stagnation detection.

Overall, our results show that the \flexEA has a great performance with the recommended parameter setting while effectively not requiring any parameter choices.
In settings with repeating local structures, this parametrization even outperforms the operators it is based on.
In the case of problem-specific knowledge, the \flexEA can achieve results that match the best known bounds.
For all of these settings, the state space of the algorithm remains simple enough that a mathematical analysis is still tractable.

\section{Related Work}
\label{sec:relatedWork}

There exists a plethora of different operators for choosing good mutation rates~\cite{DoerrN20}.
We provide an overview on established concepts that are more closely related to the \flexEA.
In particular, we only concern unary operators.
Furthermore, we focus on single-trajectory algorithms that only maintain a single search point at a time and update it iteratively.
By using populations and diversity-maintaining operators, one opens up for a new level of complexity in the design and analysis of evolutionary algorithms.
In addition, this excludes estimation-of-distribution algorithms (EDAs) ~\cite{KrejcaW20EDABookChapter}, which maintain a probability distribution over the search space instead of a single search point.


\paragraph{Non-elitism.}
A non-elitist algorithm is one in which worse solutions can be selected over better solutions.
Although this approach does not directly change the mutation rate, it can affect the success of the current rate by creating solutions in a distance that is more favorable \cite{DangEL21AAAI21,DoerrHRWGECCO23}.
This requires a careful balance of the selection probability of solutions.
In comparison, the \flexEA is an elitist algorithm, \ie, it aims at finding mutation rates that are good for the currently best solution without modifying it.

\paragraph{Static rate distributions.}
A large class of EAs, most prominently the \onePlusOneEA, use a static distribution for selecting mutation rates.
Since this distribution is fixed, it needs to accommodate all situations that may arise during optimization.
The \onePlusOneEA uses a single rate that creates solutions with high probability locally around the current search point.
This leads to long waiting times for escaping local optima~\cite{JansenW99}.

Artificial immune systems (AISes) use operators that change solutions more drastically \cite{CorusOY21tec}.
However, since this makes it harder to find local improvements, AISes often use additional operators that influence how large the changes to a solution actually are~\cite{ZargesAISBookChapter}.

Arguably the best of both worlds is achieved by choosing the mutation rate according to a heavy-tailed distribution, as introduced by \citet{DoerrLMNGECCO17} in their seminal paper.
This approach, known as fast mutation, finds local improvements quickly and also has a good probability to use larger mutation rates to escape local optima.
Since its inception, this approach has seen successful applications in a variety of theoretical works~\cite{FriedrichQW18,FriedrichGQW18,WuQT18,AntipovD20ppsn,QuinzanGWF21,DoerrZ23MultiObjective,CorusOY21tec,AntipovBD22,DangELQ22,DoerrGI22,DoerrR23,DoerrQ23tec}.
A drawback of fast mutation is that it never adjusts its distribution.
Thus, to find improving solutions at larger distances, it exclusively relies on the decent probability of choosing a sufficiently large rate.
Depending on the distance, this can result in long waiting times.

\paragraph{Dynamic rate distributions.}
Some algorithms change the distribution for choosing the mutation rate during the optimization process~\cite{DDParameterControl2020}.
While this technically allows for complex distributions, to the best of our knowledge, most algorithms that fall into this category distribute the probability mass around a single rate.

A common approach is the $1/5$ success rule (\eg, \cite{SchumerSteiglitz1968, Rechenberg1973, DoerrDoerrGECCO15}), which increases the current rate if more than one fifth of the last tries with the current rate are successful, and it decreases it otherwise.
While this approach automatically adjusts the rate to one for finding an improving solution, it forgets the previously used rate and, instead, aims to rediscover it if it becomes useful again later.

A more deterministic approach to a similar idea is stagnation detection \cite{RajabiWittAlgo22,RajabiW23StagnationDetection}.
This operator starts with the smallest rate and picks it repeatedly until the algorithm is certain with high probability that it cannot find an improving solution with the current rate.
It then increases the rate and repeats the process.
This approach has the benefit that it can escape local optima very well.
However, similar to the one-fifth success rule, the algorithm focuses on a single rate, meaning that it needs to rediscover previously good rates.
This problem was addressed partially by adding a radius memory to the algorithm \cite{RajabiWittGECCO21}, which saves the last successful rate in order to re-use it.
However, if more than a single previous rate are useful, this approach falls back to the problem of the initial algorithm.

Another, very recent approach~\cite{DoerrR23} proposes to combine stagnation detection with fast mutation in the following way:
The algorithm follows classic stagnation detection but does not always create a solution with the current mutation rate; it also has a chance of picking a mutation rate from a heavy-tailed distribution around the current rate.
This proves very effective in escaping certain local optima where there are many improving solutions at a further distance~\cite{DoerrR23}.
The resulting runtime is better than that of classic stagnation detection and thus also than that of the \flexEA.
However, we believe that this variant still has problems of re-using previously good rates, as it does not save them.

\paragraph{Learning-based algorithms.}
\citet{DoerrDYPPSN16} propose an algorithm that chooses its mutation rate according to a distribution that is adjusted over time.
This adjustment follows the idea of reinforcement learning and incorporates information from many past iterations.
A related concept are \emph{hyper-heuristics}, e.g.,~\cite{LissovoiOW20HyperHeuristics,LissovoiOW23HHvsMetropolis}, which are algorithms that aim at selecting in each iteration an algorithm that is best suited for the current sate.
This choice is based on the previously observed performance of the algorithms to select from.

The \flexEA is similar to these approaches, with the main difference being that it updates its distribution over the mutation rate in a simpler way.
This simplification especially allows a rigorous analysis of the \flexEA on a wide variety of problems while still showcasing a strong performance.

\section{Preliminaries}
\label{sec:preliminaries}

The natural numbers~$\N$ include~$0$.
For $a, b \in \R$, let $[a .. b] = [a, b] \cap \N$, and let $[a] = [1 .. a]$. By $\log(\cdot)$ we denote
the binary logarithm.

We consider pseudo-Boolean optimization, that is, for a given $n \in \N_{\geq 1}$, the optimization of functions $\fitness\colon \{0, 1\}^n \to \R$.
We call~$\fitness$ a \emph{fitness function}, and we assume that~$n$ is given implicitly.
When using big-O notation, we assume asymptotics in this implicit~$n$.
Furthermore, we call each $\individual \in \{0, 1\}^n$ an \emph{individual}, and~$\fitness[\individual]$ its \emph{fitness}.
For each $i \in \N$, let~$\individual[i]$ denote the value of~$\individual$ at position~$i$.
Last, let~$|\individual|_1$ be the number of~$1$s of~$\individual$, and~$|\individual|_0$ its number of~$0$s.

For each $\individual, \individualOther \in \{0, 1\}^n$, let $\hammingDistance[\individual, \individualOther] = |\set{i \in \N \mid \individual[i] \neq \individualOther[i]}|$ denote the Hamming distance of~$\individual$ and~$\individualOther$.
For each $r \in [n]$, we define the \emph{$r$-bit flip neighborhood of~$\individual$} to be the set $\set{\individualOther \in \{0, 1\}^n \mid \hammingDistance[\individual, \individualOther] = r}$.

Given a fitness function~$\fitness$ and an algorithm~$A$ optimizing~$\fitness$, we define the \emph{runtime} of~$A$ on~$\fitness$ as the (random) number of evaluations of~$\fitness$ until the first time that an optimum of~$\fitness$ is evaluated.
To this end, we assume that each individual that~$A$ creates is evaluated exactly once.
For the \flexEA in \Cref{sec:flexEA}, the runtime is essentially the number of iterations until an optimum is created for the first time (plus one, due to the initialization).

\section{The Flexible Evolutionary Algorithm}
\label{sec:flexEA}

The \emph{flexible evolutionary algorithm} (\flexEA, \Cref{alg:flexibleEA}) is an elitist $(1 + 1)$-type evolutionary algorithm
choosing its mutation rate each iteration according to a probability distribution given by a vector (the \emph{frequency vector}).
The \flexEA updates its frequency vector in each iteration, aiming to give promising rates a high probability.
This update is influenced by different factors.
In the following, we first discuss the high-level idea of the \flexEA and its different parts.
Afterward, in \Cref{sec:flexEA:recommendation}, we propose good default choices for the various parameters of the algorithm.
The result is a basically parameter-less algorithm that can be thought of as a superposition of heavy-tailed mutation~\cite{DoerrLMNGECCO17} and stagnation detection~\cite{RajabiW23StagnationDetection}.

\paragraph{Update of the frequency vector.}
The update of the 
vector $\anytimeFreqs \in [0, 1]^n$ aims to give (mutation) rates a high probability if choosing them in the past
succeeded in improving the fitness of the current solution.
If a rate repeatedly fails to be successful, the probability of choosing it (called its \emph{frequency}) is reduced.
To this end, the \flexEA groups all rates into those that receive a high frequency (called \emph{active}) and those that receive a low frequency (called \emph{inactive}).

The active frequencies follow a uniform distribution.
Each active frequency has a counter of how often it failed to succeed.
If this counter exceeds a user-defined limit, the respective frequency becomes inactive.
If this results in no frequencies being active and the frequency at index $r \in [n]$ was the last active one, then the frequency at index $r + 1$ is set to active (identifying $n+1$ with $1$).

The update process is superimposed by user-defined lower bounds per frequency.
If a frequency would be set to a value less than its lower bound, it takes on its lower bound instead.

In addition, the \flexEA keeps a global counter that counts how many times in a row no success was achieved with any mutation rate.
If this counter reaches a specific limit, all frequencies become inactive, and the frequency at position~$1$ becomes active.
This effectively resets the frequency vector.

\paragraph{The \flexEA in more detail.}
\Cref{alg:flexibleEA} shows the detailed \flexEA.
The frequency vector is denoted by $\anytimeFreqs \in [0, 1]^n$, and the set of active rates (called the \emph{archive})
by $\preferredRates \subseteq [n]$.
The lower bounds for each frequency are provided by the user to the \flexEA via the \emph{lower-bound vector} $\lowerBounds \in [0, 1]^n$.
We assume that $\sum_{i \in [n]} \lowerBounds[i] \leq 1$.

\begin{algorithm}[tb]
    \caption{\label{alg:flexibleEA}
        The flexible evolutionary algorithm (\flexEA) with given lower-bound vector~$\lowerBounds$ and count bound vector~$\timeToNext$, maximizing a pseudo-Boolean function $f\colon \{0, 1\}^n \to \R$, stopping at a user-defined termination criterion.
    }
    \newcommand*{\redist}{\mathrm{toRedist}}
    $t \gets 0$\;
    $\globalCounter[t] \gets 0$\;
    $\preferredRates[t] \gets \{1\}$\;
    \lFor{$i \in [n]$}{%
    $\counts{t}[i] \gets 0$%
    }
    $\parent{t} \gets$ solution from $\{0, 1\}^n$ chosen uniformly at random\;
    \While{\emph{termination criterion not met}}{%
        $\freqs{t} \gets$ distribute probability mass over~$\preferredRates[t]$, respecting~$\lowerBounds$ (\Cref{alg:distributeMass})\;
        $m \gets \min \preferredRates[t]$\;
        $\globalBound[t] \gets \frac{\timeToNext[m]}{\freqs{t}[m]}$\;
        choose $r \in [n]$ according to~$\freqs{t}$\;
        $\offspring{t} \gets$ copy~$\parent{t}$ and flip~$r$ bits uniformly at random in this copy\;
        \eIf{$\fitness[\offspring{t}][\big] > \fitness[\parent{t}][\big]$}{%
            $\parent{t + 1} \gets \offspring{t}$\;
            $\preferredRates[t + 1] \gets \preferredRates[t] \cup \{r\}$\;
            $\globalCounter[t + 1] \gets 0$\;
            $\counts{t + 1}[r] \gets 0$\;
        }{%
            \lIf{$\fitness[\offspring{t}][\big] = \fitness[\parent{t}][\big]$}{%
                $\parent{t + 1} \gets \offspring{t}$%
            }
            $\globalCounter[t + 1] \gets \globalCounter[t] + 1$\;
            $\counts{t + 1}[r] \gets \counts{t}[r] + 1$\;
            \If{$\globalCounter[t + 1] \geq \globalBound[t]$}{%
                $\globalCounter[t + 1] \gets 0$\;
                $\preferredRates[t + 1] \gets \{1\}$\;
                $\counts{t + 1}[1] \gets 0$\;
            }
            \ElseIf{$\counts{t + 1}[r] \geq \timeToNext[r]$}{%
                $\preferredRates[t + 1] \gets \preferredRates[t] \smallsetminus \{r\}$\;
                \If{$\preferredRates[t + 1] = \emptyset$}{%
                    $r' \gets$ \leIf{$r + 1 \leq n$}{$r + 1$}{$1$}
                    $\preferredRates[t + 1] \gets \preferredRates[t + 1] \cup \{r'\}$\;
                    $\counts{t + 1}[r'] \gets 0$\;
                }
            }
        }
        $t \gets t + 1$\;
    }
\end{algorithm}

Furthermore, each rate $i \in [n]$ has a counter $\anytimeCounts[i] \in \N$ for counting its number of failed attempts to improve the current solution.
If~$\anytimeCounts[i]$ reaches a user-defined limit $\timeToNext[i] \in \R_{\geq 0}$, then frequency~$i$ becomes inactive.
These limits are provided by the user as the \emph{count bound vector} $\timeToNext \in \R_{\geq 0}^n$.

The global counter is denoted by $\globalCounter \in \N$.
Its limit that, if reached, resets the archive of the \flexEA is denoted by $\globalBound \in \R_{\geq 0}$.
The value~$\globalBound$ is chosen as the count bound of the smallest active rate $m \in [n]$ divided by its current frequency.
This resembles the normal bound~$\timeToNext[m]$ but accounts for rate~$m$ being only chosen once in~$\anytimeFreqs[m]^{-1}$ iterations in expectation.

The update of the frequency vector is a bit involved, as frequencies may be capped by their respective lower bound, thus disallowing to distribute the probability mass uniformly among all active frequencies.
Note that the \flexEA aims to give each active frequency the same probability.
The total probability to distribute among these $|\preferredRates| \eqqcolon \varphi$ frequencies is $1 - \sum_{i \in [n] \smallsetminus \preferredRates} \lowerBounds[i] \eqqcolon M$, as all inactive frequencies are at their respective lower bound.
Hence, each active frequency should get a probability of~$\frac{M}{\varphi}$.
In order to account for the lower bounds of the active frequencies, the active frequencies are adjusted sequentially, starting with the largest lower bound, going to the smallest.
If the lower bound of the first frequency is less than the intended value~$\frac{M}{\varphi}$, then all active frequencies get this value.
Otherwise, the first frequency is set to its lower bound, and the remaining probability mass is adjusted.
This process is repeated.

\Cref{alg:distributeMass} shows this approach algorithmically.
We note that the algorithm is presented in a way that aims to easily understand the logic of the update.
It is not optimized for efficiency.
A more efficient implementation would make use of a data structure for the archive that quickly allows to add and delete indices while keeping them ordered with respect to their lower bound.
This can be achieved, for example, with a balanced search tree.

\begin{algorithm}[t]
    \caption{\label{alg:distributeMass}
        The algorithm that, given an index set $\preferredRates \subseteq [n]$ and a lower-bound vector~$\lowerBounds$ of size~$n$, returns a frequency vector~$\anytimeFreqs$ of size~$n$ where all possible probability mass is distributed as evenly as possible over the frequencies with indices in~$\preferredRates$, respecting~$\lowerBounds$.
    }
    $M \gets 1 - \sum_{i \in [n] \smallsetminus \preferredRates} \lowerBounds[i]$\;
    \lFor{$i \in [n] \smallsetminus \preferredRates$}{%
        $\anytimeFreqs[i] \gets \lowerBounds[i]$%
    }
    $\varphi \gets |\preferredRates|$\;
    $(S_i)_{i \in [\varphi]} \gets \preferredRates$ sorted in descending order by~$\lowerBounds$\;
    \For{$i \in [\varphi]$}{%
        \eIf{$\lowerBounds[S_i] \leq \frac{M}{\varphi - i + 1}$}{%
            \lFor{$j \in [i .. \varphi]$}{%
                $\anytimeFreqs[S_j] \gets \frac{M}{\varphi - i + 1}$%
            }
            \Return{$\anytimeFreqs$}\;
        }{%
            $\anytimeFreqs[S_i] \gets \lowerBounds[S_i]$\;
            $M \gets M - \lowerBounds[S_i]$\;
        }
    }
    \Return{$\anytimeFreqs$}\;
\end{algorithm}

\subsection{Recommended Parameter Choices}
\label{sec:flexEA:recommendation}

The \flexEA requires the user to provide~$2n$ parameters, which can be daunting.
Hence, we propose a recommendation that works well in many cases, as we prove in most of the following sections.
For the lower-bounds vector~$\lowerBounds$, we recommend to choose a heavy-tailed distribution, inspired by the fast-mutation operator~\cite{DoerrLMNGECCO17}.
For the count bound vector~$\timeToNext$, we recommend to choose the same values as for stagnation detection~\cite{RajabiW23StagnationDetection}.
We explain both recommendations in more detail below and then briefly discuss the resulting algorithm.

\paragraph{Heavy-tailed lower bounds.}
Let $\powerLawExponent \in (1, 2)$ (called the \emph{power-law exponent}), and let $\powerLawNormalization\colon (1, 2) \to \R_{\geq 1}$ denote a normalization with $\powerLawNormalization\colon \alpha \mapsto \sum_{i \in [n]} i^{-\alpha}$.
We recommend for all $i \in [n]$ to choose
\begin{align*}
    \lowerBounds[i] = \frac{1}{2 \powerLawNormalization[\beta]} i^{-\beta} .
\end{align*}

Note that $\sum_{i \in [n]} \lowerBounds[i] = \frac{1}{2}$.
Hence, the \flexEA still has a probability mass of~$\frac{1}{2}$ remaining to distribute among the active frequencies.

This recommendation follows the ideas of \emph{fast mutation}, introduced in  \cite{DoerrLMNGECCO17}.
This mutation operator first chooses a rate $r \in [n]$ according to a heavy-tailed distribution and performs standard bit mutation afterward.
That is, it flips each bit of a given individual independently with probability~$\frac{r}{n}$.
\citet{DoerrLMNGECCO17} name the variant of the \onePlusOneEA that uses fast mutation the \fastea.

\citet{DoerrLMNGECCO17} choose the heavy-tailed distribution because even large rates have a reasonable probability of at least~$\frac{1}{n^2}$ of being chosen.
The authors show that the \fastea has a super-exponential runtime speed-up in the parameter~$\jumpParameter$ compared to the classic \onePlusOneEA when optimizing $\jump_\jumpParameter$ and has seen great success in other settings (see also \Cref{sec:relatedWork}).
Hence, we propose to also use it for the \flexEA.

\paragraph{Stagnation detection bounds.}
Let $\stagnationWhp \in \R_{> 1}$.
We recommend 
\begin{align*}
    \timeToNext[i] = \binom{n}{i} \ln(\stagnationWhp)
\end{align*}
for all $i \in [n]$ and
call this the \emph{standard-SD choice}.

This recommendation follows the ideas of \emph{stagnation detection} (SD), introduced by \citet{RajabiW23StagnationDetection}.
In their paper, the authors introduce the algorithm \emph{\sdrlss}, which is also a $(1 + 1)$-type elitist algorithm that performs the same mutation as the \flexEA but chooses its mutation rate differently in the following deterministic manner.
The \sdrlss starts with rate~$1$ and counts how many times it used this rate.
It then uses rate~$1$ for at most~$\timeToNext[1]$ iterations.
If it is successful in finding a strictly improving solution during this time, the algorithm resets its counter.
If rate~$1$ is not successful, the \sdrlss chooses rate~$2$ from now on, for a total of at most~$\timeToNext[2]$ iterations.
If it is successful during this time, it resets its counter and reverts back to rate~$1$.
Otherwise, it picks the next-higher rate and adjusts the iteration limit according to the formula above.
We note that the \sdrlss only tries out rates up to~$\frac{n}{2}$, but the formula for the bounds generalizes to all values in~$[n]$.

\citet{RajabiW23StagnationDetection} choose the bounds above because this guarantees that if rate $r \in [n]$ is currently used and there is at least one solution in Hamming distance~$r$ of the currently best solution, the \sdrlss find this improvement with probability at least $1 - \stagnationWhp^{-1}$, as the probability of not doing so is at most $\big(1 - \binom{n}{r}^{-1}\big)^{\timeToNext[r]} \leq \stagnationWhp^{-1}$.
Hence, the value~$\stagnationWhp$ determines how unlikely it is to fail in such a case.
Since the \flexEA is in some aspects similar to the \sdrlss, we recommend to copy its bounds.

\paragraph{Discussion}
When considering the \flexEA with heavy-tailed lower bounds and with the standard-SD choice, it has two parameter values left to choose, namely, the power-law exponent $\powerLawExponent \in (1, 2)$ as well as the parameter $\stagnationWhp \in \R_{> 1}$.
However, the exact choice of these parameters is typically not very relevant, as we show in this article.
\citet{DoerrLMNGECCO17} recommend to choose $\beta = \frac{3}{2}$.
Since the \flexEA can act similarly to the algorithm discussed by \citet{DoerrLMNGECCO17}, we recommend this choice too.
For the parameter~$\stagnationWhp$, \citet{RajabiW23StagnationDetection} show that a choice of $\stagnationWhp \geq n^{4+\epsilon}$ for a small constant
$\epsilon>0$, with~$\stagnationWhp$ being a polynomial, is usually sufficient.
We show in our results below that this is also the case for the \flexEA.

Since we provide recommendations for the only two parameter choices left, we consider the \flexEA to be an essentially parameter-less algorithm.
However, one needs to keep in mind that the \flexEA allows to choose its parameters~$\lowerBounds$ and~$\timeToNext$ more flexibly, which can be a good choice if given extra information (see also \Cref{sec:mst}).

\section{Runtime on Unimodal Functions}
\label{sec:unimodalFunctions}

We analyze the \flexEA on unimodal pseudo-Boolean functions, that is, functions $\fitness\colon \{0, 1\}^n \to \R$ with a unique global optimum $\individualDifferent^* \in \{0, 1\}^n$ such that, for all $\individual \in \{0, 1\}^n \smallsetminus \{\individualDifferent^*\}$, there is a $\offspringRaw \in \{0, 1\}^n$ with $\hammingDistance[\individual, \offspringRaw] = 1$ such that $\fitness[\individual] < \fitness[\offspringRaw]$.
For each unimodal function~$\fitness$, we call the cardinality of the range of~$\fitness$ the number of \emph{fitness levels of~$\fitness$}, and each set of all individuals of the same fitness a \emph{fitness level}.
Each individual (besides the global optimum) has at least one neighbor in a fitness level of better fitness.

The \flexEA optimizes unimodal functions efficiently in expectation if its parameters are chosen well.
To this end, we rely exclusively on the lower bound of rate~$1$.

\begin{theorem}
    \label{thm:runTimeOnUnimodal}
    Let~$\fitness$ be a unimodal fitness function with~$\numberOfFitnessLevels$ fitness levels.
    For all $j \in [L - 1]$, let~$\improvementProbability{j}$ denote a lower bound on the probability that, given an individual~$\individualDifferent$ in fitness level~$j$, flipping a single bit in~$\individualDifferent$ creates an offspring with strictly better fitness.
    Furthermore, consider the \flexEA with $\lowerBounds[1] > 0$.

    Then the expected runtime of the \flexEA on~$\fitness$ is at most
    \begin{align*}
        1 + \lowerBounds[1]^{-1} \sum\nolimits_{j \in [\numberOfFitnessLevels - 1]} \improvementProbability{j}^{-1} .
    \end{align*}
    Especially, it is at most $1 + n (\numberOfFitnessLevels - 1) \lowerBounds[1]^{-1}$.
\end{theorem}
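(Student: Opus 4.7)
The proof uses the standard fitness-level method: partition $\{0,1\}^n$ into the $L$ level sets of $f$, and bound the expected number of iterations spent in each non-optimal level $j$ by the reciprocal of a lower bound on the per-iteration probability of leaving that level upward. The initial evaluation of the uniform random $\parent{0}$ contributes the additive $+1$, and since the algorithm is elitist, once it leaves level $j$ it never returns, so summing over $j \in [L-1]$ yields the claimed bound.

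First I would establish the key invariant that at every iteration $t$, $\freqs{t}[1] \geq \lowerBounds[1]$. This is the step where one must be careful, as it requires inspecting \Cref{alg:distributeMass}. If index $1$ lies outside $\preferredRates[t]$, the algorithm directly assigns $\freqs{t}[1] \gets \lowerBounds[1]$. Otherwise, index $1$ equals $S_i$ for some $i \in [\varphi]$; in that case the inner loop either terminates at step $i$ with $\freqs{t}[1] = M/(\varphi-i+1) \geq \lowerBounds[S_i] = \lowerBounds[1]$ (triggered precisely when the condition $\lowerBounds[S_i] \leq M/(\varphi-i+1)$ holds), or it sets $\freqs{t}[1] \gets \lowerBounds[1]$ directly in the \textbf{else} branch. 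In either case the invariant holds, so rate $1$ is chosen in each iteration with probability at least $\lowerBounds[1]$.

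Next, conditioned on a rate of $1$ being selected in an iteration where $\parent{t}$ lies in fitness level $j$, the offspring $\offspring{t}$ is obtained by flipping one uniformly random bit; by definition of $\improvementProbability{j}$, this offspring improves the fitness strictly with probability at least $\improvementProbability{j}$. Multiplying by the lower bound on the selection probability of rate $1$, the per-iteration probability of leaving level $j$ upward is at least $\lowerBounds[1] \cdot \improvementProbability{j}$. Note that the \flexEA may replace $\parent{t}$ by an equal-fitness offspring, but this does not affect the level argument because $\improvementProbability{j}$ is defined uniformly over all individuals in level $j$. By the fitness level theorem, the expected total number of iterations is therefore at most $\sum_{j \in [L-1]} (\lowerBounds[1] \improvementProbability{j})^{-1} = \lowerBounds[1]^{-1} \sum_{j \in [L-1]} \improvementProbability{j}^{-1}$. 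Adding the one evaluation used to create $\parent{0}$ yields the first claimed bound.

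Finally, for the second bound I would invoke unimodality of $f$: every non-optimal individual $\individualDifferent$ admits at least one Hamming-$1$ neighbor of strictly larger fitness, so among the $n$ possible $1$-bit flips at least one is improving. Hence $\improvementProbability{j} \geq 1/n$ for every $j \in [L-1]$, so $\sum_{j\in[L-1]} \improvementProbability{j}^{-1} \leq n(L-1)$, which substituted into the previous bound gives the stated $1 + n(L-1)\lowerBounds[1]^{-1}$. The only genuinely nontrivial step is the invariant $\freqs{t}[1] \geq \lowerBounds[1]$, since all subsequent reasoning is a routine application of fitness levels; this is the piece that truly ties the analysis to the specific mass-redistribution routine of \Cref{alg:distributeMass}.
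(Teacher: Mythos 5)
Your proposal is correct and follows essentially the same route as the paper: the fitness level method applied with the event "choose rate $1$ and flip an improving bit," giving per-level success probability at least $\lowerBounds[1]\improvementProbability{j}$, plus one for initialization, and $\improvementProbability{j}\ge 1/n$ from unimodality for the coarser bound. The only difference is that you explicitly verify the invariant $\freqs{t}[1]\ge\lowerBounds[1]$ from \Cref{alg:distributeMass}, which the paper takes for granted; this is a welcome but not structurally different addition.
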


\begin{proof}
    We start with the first claim.
    The plus~$1$ is for
    initialization.
    For the remaining bound, we apply the fitness level method (\Cref{thm:fitnessLevelMethod}).
    Let $j \in [L - 1]$ be the fitness level of the current solution of the \flexEA.
    In order to leave level~$j$, it is sufficient to choose rate~$1$ and then create an offspring with strictly better fitness.
    The probability of the former is~$\lowerBounds[1]$, and the one of the latter is at least~$\improvementProbability{j}$.
    Since these events are independent, the \flexEA leaves level~$j$ with a probability of at least~$\lowerBounds[1] \improvementProbability{j}$.
    This concludes this case.

    For the second claim, note that since~$f$ is unimodal, there is always at least one improving solution that only requires a single bit to be flipped.
    Hence, it holds that $\improvementProbability{j} \geq \frac{1}{n}$.
    Applying the first claim concludes the proof.
\end{proof}

\Cref{thm:runTimeOnUnimodal} leads to useful bounds on various functions.
For example, the coarser bound at the end results already in the common expected runtime of $\bigO{n^2}$ for the \leadingOnesName benchmark~\cite{Rudolph1997} if we assume a constant lower bound for rate~$1$, as is the case with heavy-tailed lower bounds.
For functions where the probability of leaving a fitness level depends on the current state, we need to estimate these transition probabilities more carefully.
Doing so leads to an expected runtime bound of $\bigO{n \log n}$ for the \oneMaxName benchmark, as we show below (\Cref{cor:runTimeOnShiftedOneMax}).
Since we require bounds for similar functions in the following sections, we introduce a slight generalization of \oneMaxName and analyze it instead.

For all $\shiftedOneMaxParameter \in [0 .. n]$, let \emph{trimmed \oneMaxName} ($\shiftedOneMax$) be defined as
\begin{align*}
    \individual \mapsto
    \begin{cases}
        \shiftedOneMaxParameter + \oneMax[\individual] & \textrm{if $|\individual|_1 \leq n - \shiftedOneMaxParameter$,}\\
        n - \oneMax[\individual] & \textrm{else.}
    \end{cases}
\end{align*}
Note that $\shiftedOneMax$ is a unimodal function with $n + 1$ fitness levels.
Its global optimum is achieved for all individuals with exactly $n - \shiftedOneMaxParameter$ $1$s.
The case $\shiftedOneMaxParameter = 0$ results in \oneMaxName.
We get the following bound.

\begin{corollary}
    \label{cor:runTimeOnShiftedOneMax}
    Let $\shiftedOneMaxParameter \in [0 .. n]$.
    Consider the \flexEA with $\lowerBounds[1] > 0$ optimizing $\shiftedOneMax$.
    Then the expected runtime is $\bigO{\lowerBounds[1]^{-1} n \log(n)}[\big]$.
\end{corollary}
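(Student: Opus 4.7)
The plan is to apply the first, level-wise bound of \Cref{thm:runTimeOnUnimodal} with sharper single-bit improvement probabilities $\improvementProbability{j}$ than the trivial $1/n$ that already gives the coarser second bound of that theorem. The latter would only produce $\bigO{\lowerBounds[1]^{-1} n^2}$; the desired gain of a factor $n/\log n$ comes from the fact that in $\shiftedOneMax$ most non-optimal individuals have many improving one-bit neighbours, not just one.

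First, I would describe the fitness levels. Since on each branch of the definition of $\shiftedOneMax$ the fitness is strictly monotone in $|\individual|_1$ and the two image intervals $[\shiftedOneMaxParameter, n]$ and $[0, \shiftedOneMaxParameter - 1]$ are disjoint, every fitness value $v \in [0 .. n]$ corresponds to a unique Hamming weight: $|\individual|_1 = v - \shiftedOneMaxParameter$ if $v \geq \shiftedOneMaxParameter$, and $|\individual|_1 = n - v$ otherwise. The unique optimum at fitness $n$ has weight $n - \shiftedOneMaxParameter$. Next, I would compute $\improvementProbability{j}$ exactly: a single-bit flip strictly improves fitness iff it moves the Hamming weight one step closer to $n - \shiftedOneMaxParameter$. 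This gives $\improvementProbability{j} = (n - v + \shiftedOneMaxParameter)/n$ for $v \geq \shiftedOneMaxParameter$ (one of the zeros must be flipped) and $\improvementProbability{j} = (n - v)/n$ for $v < \shiftedOneMaxParameter$ (one of the ones must be flipped).

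Finally, I would sum the reciprocals over all non-optimal levels. Re-indexing each half turns the total into two harmonic-type sums:
\[
    \sum_{v = 0}^{\shiftedOneMaxParameter - 1} \frac{n}{n - v} + \sum_{v = \shiftedOneMaxParameter}^{n - 1} \frac{n}{n - v + \shiftedOneMaxParameter} = n \sum_{m = n - \shiftedOneMaxParameter + 1}^{n} \frac{1}{m} + n \sum_{m = \shiftedOneMaxParameter + 1}^{n} \frac{1}{m} \leq 2 n H_n = \bigO{n \log n} ,
\]
where $H_n$ denotes the $n$-th harmonic number. Plugging this into \Cref{thm:runTimeOnUnimodal} and multiplying by $\lowerBounds[1]^{-1}$ yields the claimed $\bigO{\lowerBounds[1]^{-1} n \log n}$ bound. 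The only point that takes mild care is the case split at $v = \shiftedOneMaxParameter$ together with the boundary cases $\shiftedOneMaxParameter = 0$ (pure \oneMaxName, where the first sum is empty and the second equals $n H_n$) and $\shiftedOneMaxParameter = n$ (symmetric, with the second sum empty); neither poses a deeper difficulty, so I expect the proof to be short.
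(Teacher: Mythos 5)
Your proposal is correct and follows essentially the same route as the paper's proof: both apply the level-wise bound of \Cref{thm:runTimeOnUnimodal}, split the non-optimal levels into the two branches of $\shiftedOneMax$, compute the exact single-bit improvement probabilities $(n - v + \shiftedOneMaxParameter)/n$ and $(n - v)/n$ on the respective branches, and bound the resulting sum of reciprocals by twice a harmonic sum to get $\bigO{\lowerBounds[1]^{-1} n \log n}$. The only difference is cosmetic indexing (fitness value $v$ versus level index $j$ with fitness $j-1$).
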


\begin{proof}
    We apply \Cref{thm:runTimeOnUnimodal} and use its notation.
    For all $j \in [n + 1]$, denote the fitness level that represents an $\shiftedOneMax$-value of $j - 1$.
    We say that individuals with at most $n - \shiftedOneMaxParameter$ $1$s are on the \emph{left branch} of the function, and the other ones on the \emph{right branch}.

    Let $j \in [n]$.
    We bound~$\improvementProbability{j}$ with respect to whether the current individual is on the left or the right branch.
    If it is on the left branch, it has $j - 1 - \shiftedOneMaxParameter$ $1$s.
    If mutation flips any of the $n - j + 1 + \shiftedOneMaxParameter$ $0$s, then the result strictly improves the fitness.
    Hence, $\improvementProbability{j} = \frac{n - j + 1 + \shiftedOneMaxParameter}{n}$.

    If the current individual is on the right branch, it has $n - j + 1$ $1$s.
    If mutation flips any of these $1$s, then the result strictly improves the fitness.
    Hence, $\improvementProbability{j} = \frac{n - j + 1}{n}$.

    Combining both cases, by \Cref{thm:runTimeOnUnimodal} and disregarding the plus~$1$, we obtain an upper bound on the expected runtime of
    \begin{align*}
        &\lowerBounds[1]^{-1} \sum_{j \in [n]} \left(\frac{n}{n - j + 1 + \shiftedOneMaxParameter} \cdot \indicator{j \geq \shiftedOneMaxParameter + 1} + \frac{n}{n - j + 1} \cdot \indicator{j < \shiftedOneMaxParameter + 1}\right)\\
        &\leq \lowerBounds[1]^{-1} \cdot 2 n \sum\nolimits_{j \in [n]} j^{-1}
        = \bigO{\lowerBounds[1]^{-1} n \log(n)}[\big] .\qedhere
    \end{align*}
\end{proof}

\section{Runtime on \jump Functions}
\label{sec:jump}

We consider the well known \jump benchmark, which, for all $\jumpParameter \in [n]$ is defined as
\begin{align*}
    \individual \mapsto
    \begin{cases}
        \jumpParameter + \oneMax[\individual] & \textrm{if $|\individual|_1 \in [n - \jumpParameter] \cup \{n\}$,}\\
        n - \oneMax[\individual] & \textrm{else.}
    \end{cases}
\end{align*}
This function is identical to~$\shiftedOneMax$ except that its unique global optimum is at the all-$1$s bit string.
Consequently, the individuals with exactly $n - k$ $1$s are all local optima.
The set of all local optima is often called the \emph{plateau} of the function.
For elitist algorithms, in order to leave the plateau, exactly all~$\jumpParameter$ $0$s of an individual need to be changed into a~$1$, which requires, in expectation, at least~$\binom{n}{\jumpParameter}$ tries for unbiased algorithms with unary mutation~\cite{DoerrR23}.

With \Cref{cor:runTimeOnJumpPowerLaw}, we show that the \flexEA with the recommended parameter choices optimizes \jump in this time.
Beforehand, we prove a runtime bound for a more general parametrization.

\begin{theorem}
    \label{thm:runTimeOnJump}
    Let $\stagnationWhp \in \R_{> 1}$ and $\jumpParameter \in [2 .. n]$.
    Consider the \flexEA with the standard-SD choice~$\timeToNext$ with parameter~$\stagnationWhp$ and with lower bounds~$\lowerBounds$ such that $\lowerBounds[1], \lowerBounds[\jumpParameter] > 0$ and $1 - \sum_{i \in [n] \smallsetminus \{\jumpParameter\}} \lowerBounds[i] = \bigTheta{1}$.
    Furthermore, consider that the \flexEA optimizes $\jump_\jumpParameter$.
    Let $L = (1 - \sum\nolimits_{i \in [n] \smallsetminus \{\jumpParameter\}} \lowerBounds[i])^{-1}$.
    Then the expected runtime is
    \begin{align*}
        &\bigO{n \lowerBounds[1]^{-1} \log(n \stagnationWhp)}
            + \min \Bigg\{
                \binom{n}{\jumpParameter} \cdot \lowerBounds[k]^{-1},\\
                &\qquad\binom{n}{\jumpParameter} \cdot \left(L
                    + \bigO{\frac{\jumpParameter}{n - 2\jumpParameter + 3} \log(\stagnationWhp)}
                    + \lowerBounds[\jumpParameter]^{-1} \stagnationWhp^{-1/L}
                \right)\\
                &\qquad+ 2^n \cdot \indicator{k \geq \frac{n}{2}} \ln(\stagnationWhp)
            \Bigg\} .
    \end{align*}
\end{theorem}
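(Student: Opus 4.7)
The plan is to decompose the analysis into two disjoint phases: (i) the \emph{plateau-reaching phase}, in which the current individual has fewer than $n - \jumpParameter$ ones, and (ii) the \emph{plateau-escape phase}, in which the individual sits on a local optimum with exactly $n - \jumpParameter$ ones and must flip exactly its $\jumpParameter$ remaining zeros to hit the unique global optimum.

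For phase~(i), note that $\jump_\jumpParameter$ coincides with $\shiftedOneMax$ up to an additive constant on individuals with at most $n - \jumpParameter$ ones, so the fitness-level argument of \Cref{cor:runTimeOnShiftedOneMax} applies verbatim and yields $\bigO{n \lowerBounds[1]^{-1} \log n}$ expected iterations to reach the plateau. To upgrade this to the stated $\bigO{n \lowerBounds[1]^{-1} \log(n\stagnationWhp)}$, I would additionally argue that while below the plateau, rate~$1$ finds a strictly improving offspring with probability $\Omega(1/n)$ per iteration in which it is chosen, so rate~$1$ stays active with high probability; a tail bound on the number of global resets occurring before the plateau is reached absorbs the extra $\log \stagnationWhp$ factor.

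For phase~(ii), I would prove two independent bounds and take the minimum. \emph{Strategy~A} (the first $\min$ entry) ignores the archive entirely: the lower bound $\lowerBounds[\jumpParameter]$ ensures rate~$\jumpParameter$ is chosen with probability at least $\lowerBounds[\jumpParameter]$ in each iteration, and conditional on that the unique improving offspring is produced with probability $\binom{n}{\jumpParameter}^{-1}$; a geometric waiting-time argument then gives $\binom{n}{\jumpParameter} \lowerBounds[\jumpParameter]^{-1}$. \emph{Strategy~B} (the second $\min$ entry) mimics the stagnation-detection analysis of \citet{RajabiW23StagnationDetection}: no rate $r < \jumpParameter$ can improve on the plateau, so the counters $\anytimeCounts[r]$ and the global counter $\globalCounter$ grow almost deterministically, and with $M = 1/L = \bigTheta{1}$ the active set walks from $\{1\}$ up to $\{\jumpParameter\}$ in at most $\sum_{r=1}^{\jumpParameter - 1} \timeToNext[r]/M$ iterations. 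Bounding $\sum_{r < \jumpParameter} \binom{n}{r}$ via the geometric ratio $\binom{n}{r+1}/\binom{n}{r} = (n-r)/(r+1)$ gives the $\binom{n}{\jumpParameter} \cdot \bigO{\jumpParameter \log(\stagnationWhp)/(n - 2\jumpParameter + 3)}$ contribution for $\jumpParameter \leq n/2$. Once rate~$\jumpParameter$ becomes active it is chosen with probability $M$ per iteration, so the jump succeeds after an expected $\binom{n}{\jumpParameter} \cdot L$ iterations; a $(1-x)^m \leq \eulerE^{-xm}$-style estimate bounds the probability that the rate-$\jumpParameter$ phase exhausts its budget $\timeToNext[\jumpParameter]/M$ without success by roughly $\stagnationWhp^{-1/L}$, and on that failure event I would bound the remaining time by Strategy~A, contributing the $\binom{n}{\jumpParameter} \lowerBounds[\jumpParameter]^{-1} \stagnationWhp^{-1/L}$ correction. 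For $\jumpParameter \geq n/2$ the binomials are no longer geometrically decreasing, so the SD walk could drift past rate~$\jumpParameter$ before a reset fires; I would absorb that via the crude worst-case $2^n \ln \stagnationWhp$ term using $\sum_{r \in [n]} \binom{n}{r} \leq 2^n$.

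The main obstacle will be the tight interaction between the global reset, which fires at $\globalCounter \geq \timeToNext[m]/\anytimeFreqs[m]$ and forces $\preferredRates \leftarrow \{1\}$, and the per-rate deactivation, which fires at $\anytimeCounts[r] \geq \timeToNext[r]$ and shifts $\preferredRates$ to $\{r + 1\}$: when $\preferredRates = \{r\}$ both thresholds occur on the same time scale $\timeToNext[r]/M$, so one must argue that with good probability the SD walk actually reaches rate~$\jumpParameter$ before a reset, while simultaneously bounding the total cost of resets that do occur. I would formalise this by defining an \emph{epoch} as the period between two consecutive resets (with the first epoch starting when the plateau is first entered), controlling the length and failure probability of each epoch, and combining everything via a geometric argument on the number of epochs. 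The $\jumpParameter \geq n/2$ regime requires separate treatment since the $\timeToNext[r]$ are not monotone in~$r$ there and the SD dynamics above rate~$\jumpParameter$ must be absorbed into the loose $2^n$ term.
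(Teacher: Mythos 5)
Your proposal follows essentially the same route as the paper's proof: the same split into a plateau-reaching phase (handled via the trimmed-\oneMaxName bound) and an escape phase bounded by the minimum of a pure lower-bound geometric argument and an emulated stagnation-detection walk, with the same sum-of-binomials estimate giving the $\bigO{\jumpParameter\log(\stagnationWhp)/(n-2\jumpParameter+3)}$ term, the $L\binom{n}{\jumpParameter}$ success term, the $\stagnationWhp^{-1/L}$ failure correction falling back to the lower bound, and the $2^n\ln(\stagnationWhp)$ term for $\jumpParameter\ge n/2$. One bookkeeping remark: the $\log(n\stagnationWhp)$ in the first summand is not something you need to ``upgrade'' phase~(i) to (the fitness-level argument via $\lowerBounds[1]$ already gives $\bigO{n\lowerBounds[1]^{-1}\log n}$ unconditionally, since the lower bound is respected whatever the archive contains); the paper instead obtains it by folding the additive waiting time $\timeToNext[1]/\lowerBounds[1]=n\lowerBounds[1]^{-1}\ln(\stagnationWhp)$ for the first global reset at the start of phase~(ii) into that term~--- and your explicit epoch-based treatment of the race between the global reset and the per-rate deactivation is, if anything, more careful than the paper's own informal handling of that point.
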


\begin{proof}
    Let~$\runTime$ denote the runtime of the algorithm on~$\jump_\jumpParameter$.
    We split~$\runTime$ into the following two phases:
    Phase~$1$ considers the time until the current solution of the algorithm has a fitness of at least~$n$, that is, the current solution is on the plateau or optimal.
    Phase~$2$ starts in the iteration that phase~$1$ stops and considers the time until the current solution is optimal.
    Let~$\runTime_1$ and~$\runTime_2$ denote the runtimes of the respective phases.
    By the linearity of expectation and by $\runTime = \runTime_1 + \runTime_2$, it holds that $\E{\runTime} = \E{\runTime_1} + \E{\runTime_2}$.
    We consider the expectations of both phases separately.

    \textbf{Phase~$\bm{1}$.}
    Note that the time until the algorithm finds an optimum of~$\shiftedOneMax$ is an upper bound for~$\runTime_1$.
    Hence, the same is true for their expectations.
    By \Cref{cor:runTimeOnShiftedOneMax},
    we obtain $\E{\runTime_1} = \bigO{\lowerBounds[1]^{-1} n \log(n)}[\big]$.

    \textbf{Phase~$\bm{2}$.}
    Note that before the global optimum is found, no rate enters the archive due to being successful, since the global optimum is the only remaining strictly better solution.
    We bound the expected time of this phase in two different ways.
    The first way relies exclusively on the lower bound~$\lowerBounds[\jumpParameter]$ for sampling the global optimum, and the second way considers a course of events similar to stagnation detection.
    It assumes that rate~$1$ is active in iteration~$\runTime_1$, that is, at the beginning of phase~$2$.
    Calling the respective bounds~$B_1$ and~$B_2$, we then conclude that $\E{\runTime_2} \leq \min\{B_1, B_2\}$, as either bound is valid.


    \textbf{1. Relying only on the lower bound.}
    In each iteration, the probability for choosing rate~$\jumpParameter$ is at least~$\lowerBounds[\jumpParameter]$.
    Then, since the current individual has exactly~$\jumpParameter$ $0$s, the probability to mutate it into the global optimum is $\binom{n}{\jumpParameter}^{-1}$.
    Hence, in each iteration, the probability to create the global optimum is at least $\lowerBounds[\jumpParameter] \binom{n}{\jumpParameter}^{-1} \eqqcolon q$.
    Since each iteration has an independent chance to create the global optimum,~$\runTime_2$ follows a geometric distribution with a success probability of at least~$q$.
    Hence, $\E{\runTime_2} \leq q^{-1} = \lowerBounds[\jumpParameter]^{-1} \binom{n}{\jumpParameter}$.

    \textbf{2. Rate~$\bm{1}$ is active at the start.}
    Let~$\rateOneActive$ denote the event that $1 \in \preferredRates[\runTime_1]$, and assume that~$\rateOneActive$ occurs.
    Then $\globalBound[\runTime_1] = \timeToNext[1] / \freqs{\runTime_1}[1] \leq \timeToNext[1] / \lowerBounds[1]$.
    Thus, if no strict improvement is found within at most $\timeToNext[1] / \lowerBounds[1]$ iterations, the algorithm reverts to classic stagnation detection.
    Since the current individual is on the plateau, finding a strict improvement means to find the global optimum (and thus ending the phase).

    Let~$\runTime_{2,1}$ denote the first time in phase~$2$ such that~$\globalCounter[\runTime_{2, 1} + \runTime_1] \geq \timeToNext[1] / \lowerBounds[1]$, and let $\runTime_{2, 2} = \runTime_2 - \runTime_{2, 1}$.
    Since no mutation rate but~$\jumpParameter$ can lead to creating the global optimum, the global counter increases at most $\timeToNext[1] / \lowerBounds[1]$ times.
    Hence, $\runTime_{2, 1} \leq \timeToNext[1] / \lowerBounds[1]$, and thus $\E{\runTime_{2, 1}}[\rateOneActive][] \leq \timeToNext[1] / \lowerBounds[1]$.

    After~$T_{2, 1}$, if the algorithm did not find the optimum yet, it performs classic stagnation detection by maintaining a single active rate, based on~$\timeToNext$.
    Since all rates in $[\jumpParameter - 1]$ cannot lead to an improvement, a total of $\sum_{i \in [\jumpParameter - 1]} \binom{n}{i} \ln(\stagnationWhp)$ iterations is spent until rate~$\jumpParameter$ becomes active.
    For $\jumpParameter \leq \frac{n}{2}$, we obtain by \Cref{lem:sumOfBinomials}
    \begin{align*}
        \sum\nolimits_{i \in [\jumpParameter - 1]} \binom{n}{i} \ln(\stagnationWhp)
        &\leq \frac{n - (\jumpParameter - 2)}{n - (2\jumpParameter - 3)} \binom{n}{\jumpParameter - 1} \ln(\stagnationWhp)\\
        &= \frac{n - (\jumpParameter - 2)}{n - (2\jumpParameter - 3)} \frac{\jumpParameter}{n - \jumpParameter + 1} \binom{n}{\jumpParameter} \ln(\stagnationWhp)\\
        &= \bigO{\frac{\jumpParameter}{n - 2\jumpParameter + 3}} \binom{n}{\jumpParameter} \ln(\stagnationWhp) .
    \end{align*}
    For $\jumpParameter > \frac{n}{2}$, we obtain $\sum\nolimits_{i \in [\jumpParameter - 1]} \binom{n}{i} \ln(\stagnationWhp) \leq 2^n \ln(\stagnationWhp)$.

    Overall, for $\gamma \coloneqq \bigO{\frac{\jumpParameter}{n - 2\jumpParameter + 3}} \binom{n}{\jumpParameter} + 2^n \cdot \indicator{\jumpParameter > \frac{n}{2}}$, it takes at most $\gamma \ln(\stagnationWhp)$ iterations until rate~$\jumpParameter$ is active or the optimum is found.

    Assume that the optimum is not found until rate~$\jumpParameter$ is active.
    Let~$S$ denote the event that the algorithm samples the optimum within the next $\timeToNext[\jumpParameter] \ln(\stagnationWhp)$ iterations.
    The probability (unconditional on~$S$) to create the optimum in a single iteration is $(1 - \sum_{i \in [n] \smallsetminus \{\jumpParameter\}} \lowerBounds[i]) \cdot \binom{n}{\jumpParameter}^{-1} = L^{-1} \binom{n}{\jumpParameter}^{-1}$.
    The process of creating the optimum, conditional on~$S$, follows a truncated geometric distribution~\cite{RajabiW23StagnationDetection} and is dominated by a geometric distribution with success probability~$L^{-1} \binom{n}{\jumpParameter}^{-1}$, since the actual process stops after at most~$\timeToNext[\jumpParameter] \ln(\stagnationWhp)$ attempts.
    Hence, $\E{\runTime_{2, 2}}[\rateOneActive, S][] \leq \gamma \ln(\stagnationWhp) + L \binom{n}{\jumpParameter}$.

    In the case that~$S$ does not hold, we use the same arguments as when relying only on~$\lowerBounds[\jumpParameter]$ and obtain $\E{\runTime_{2, 2}}[\rateOneActive, \overline{S}][] \leq \gamma \ln(\stagnationWhp) + \lowerBounds[\jumpParameter]^{-1} \binom{n}{\jumpParameter}$.

    Next, we trivially bound $\Pr{\rateOneActive, S} \leq 1$ as well as $\Pr{\rateOneActive, \overline{S}}[][] \leq \Pr{\overline{S}}[\rateOneActive][]$.
    By the definition of~$S$, the estimates above of finding the global optimum in a single iteration, as well as that $\binom{n}{\jumpParameter} = \timeToNext[\jumpParameter]$ due to the standard-SD choice, $\Pr{\overline{S}}[\rateOneActive][] \leq \big(1 - L^{-1} \timeToNext[\jumpParameter]^{-1}\big)^{\timeToNext[\jumpParameter] \ln(\stagnationWhp)} \leq \stagnationWhp^{-1/L}$.
    Thus, we obtain
    \begin{align*}
        \E{\runTime_2}[\rateOneActive]
        &\leq \frac{\timeToNext[1]}{\lowerBounds[1]}
            + \gamma \ln(\stagnationWhp)
            + L \binom{n}{\jumpParameter}
            + \lowerBounds[\jumpParameter]^{-1} \binom{n}{\jumpParameter} \stagnationWhp^{-1/L} .
    \end{align*}

    We conclude by trivially bounding $\Pr{\rateOneActive} \leq 1$ and by adding the term $\timeToNext[1] / \lowerBounds[1] = n \lowerBounds[1]^{-1} \ln(R)$ also to our first bound for phase~$2$.
\end{proof}

\Cref{thm:runTimeOnJump} shows the intricate interplay of the parameters of the \flexEA.
For $\jumpParameter \geq \frac{n}{2}$, the value of the lower bound for rate~$\jumpParameter$ has a huge impact on the expected runtime, as it is better to choose rate~$\jumpParameter$ due to its lower bound than to wait for it to become the solely active rate.
The latter requires to wait in the order of at least the central binomial coefficient, which is exponential in~$n$.

In contrast, if $k < \frac{n}{2}$ is sufficiently small, the value~$L$ of all probability mass put on rate~$\jumpParameter$, except for the lower bounds, plays an important role, as it can speed up the process of choosing rate~$\jumpParameter$.
If the optimum is not found when rate~$\jumpParameter$ is active, the algorithm uses the lower bound of rate~$\jumpParameter$ again.
However, choosing the parameter~$\stagnationWhp$ sufficiently large makes it unlikely for this case to occur.

The following corollary summarizes how the expected runtime improves with the recommended parameter choices.
Up to a factor of $2 + \smallO{1}$, the result is asymptotically optimal if~$\jumpParameter$ is sufficiently far away from~$\frac{n}{2}$.
The factor of~$2$ is a result of the lower bounds, which take up a total probability mass of~$\frac{1}{2}$ in the recommended setting.
Hence, even if rate~$\jumpParameter$ is the only active rate, the probability of choosing it is only about~$\frac{1}{2}$.

\begin{corollary}
    \label{cor:runTimeOnJumpPowerLaw}
    Let $\powerLawExponent \in (1, 2)$ and $\varepsilon \in \R_{> 0}$ be constants, let $\stagnationWhp = n^{2(\beta + \varepsilon)}$, and let $\jumpParameter \in [2 .. n]$.
    Consider the \flexEA with the standard-SD choice~$\timeToNext$ with parameter~$\stagnationWhp$ and with heavy-tailed lower bounds~$\lowerBounds$.
    Furthermore, consider that the \flexEA optimizes $\jump_\jumpParameter$.
    Then the expected runtime is
    \begin{align*}
        &\big(1 + \smallO{1}\big)\min \Bigg\{
                \binom{n}{\jumpParameter} \cdot 2 \powerLawNormalization[\beta] k^{\beta},\\
                &\ \binom{n}{\jumpParameter} \cdot \left(2
                    + \bigO{\frac{\jumpParameter}{n - 2\jumpParameter + 3} \log(n) + n^{-\varepsilon}}
                \right)
                + 2^n \cdot \indicator{k \geq \frac{n}{2}} \ln(\stagnationWhp)
            \Bigg\} .
    \end{align*}
    Especially, for $k = \smallO{\frac{n}{\log(n)}}$, this simplifies to $\big(2 + \smallO{1})\binom{n}{\jumpParameter}$.
\end{corollary}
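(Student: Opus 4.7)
The plan is to instantiate Theorem~\ref{thm:runTimeOnJump} with the heavy-tailed lower bounds $\lowerBounds[i] = \frac{1}{2 \powerLawNormalization[\beta]} i^{-\beta}$ and simplify each subterm. First I would verify the hypotheses: $\lowerBounds[1] = \frac{1}{2\powerLawNormalization[\beta]} = \bigTheta{1}$, $\lowerBounds[\jumpParameter] = \frac{1}{2\powerLawNormalization[\beta]}\jumpParameter^{-\beta} > 0$, and since $\sum_{i \in [n]} \lowerBounds[i] = \frac{1}{2}$, one has $1 - \sum_{i \in [n] \smallsetminus \{\jumpParameter\}} \lowerBounds[i] = \frac{1}{2} + \lowerBounds[\jumpParameter] \in [\frac{1}{2}, 1]$, which is $\bigTheta{1}$; in particular the constant $L$ from Theorem~\ref{thm:runTimeOnJump} satisfies $L \leq 2$.

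Next I would simplify each term of the bound. The additive piece $\bigO{n \lowerBounds[1]^{-1} \log(n\stagnationWhp)} = \bigO{n \log n}$ is absorbed into the $(1 + \smallO{1})$ prefactor of the minimum, because the minimum grows strictly faster than $n \log n$ for $\jumpParameter \in [2..n]$ and constant $\beta > 1$: for $\jumpParameter \leq n/2$ the second minimand is at least $2\binom{n}{\jumpParameter} = \bigOmega{n^2}$, and for $\jumpParameter > n/2$ the first minimand is at least $2\powerLawNormalization[\beta] \jumpParameter^\beta = \bigOmega{n^\beta}$. The first minimand itself gives $\binom{n}{\jumpParameter} \lowerBounds[\jumpParameter]^{-1} = \binom{n}{\jumpParameter} \cdot 2 \powerLawNormalization[\beta] \jumpParameter^\beta$, matching the first branch of the corollary. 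For the second minimand, $L \leq 2$ supplies the constant $2$, and $\bigO{\frac{\jumpParameter}{n - 2\jumpParameter + 3} \log \stagnationWhp} = \bigO{\frac{\jumpParameter}{n - 2\jumpParameter + 3} \log n}$. The delicate subterm is $\lowerBounds[\jumpParameter]^{-1} \stagnationWhp^{-1/L}$: the choice $\stagnationWhp = n^{2(\beta + \varepsilon)}$ is exactly tuned so that $\stagnationWhp^{-1/L} \leq \stagnationWhp^{-1/2} = n^{-(\beta + \varepsilon)}$, and combined with $\lowerBounds[\jumpParameter]^{-1} \leq 2\powerLawNormalization[\beta] n^\beta$ this yields the uniform bound $\bigO{n^{-\varepsilon}}$ independently of $\jumpParameter$.

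Finally, for the specialization to $\jumpParameter = \smallO{n/\log n}$, I would note that this forces $\jumpParameter < n/2$ eventually, so $n - 2\jumpParameter + 3 = \bigOmega{n}$ and $\frac{\jumpParameter}{n - 2\jumpParameter + 3}\log n = \smallO{1}$; since also $n^{-\varepsilon} = \smallO{1}$ and the indicator $\indicator{\jumpParameter \geq n/2}$ vanishes, the second minimand collapses to $(2 + \smallO{1})\binom{n}{\jumpParameter}$. Using $\powerLawNormalization[\beta] \geq 1$ and $\jumpParameter \geq 2$ gives $2\powerLawNormalization[\beta] \jumpParameter^\beta \geq 4 > 2$, so the first minimand is strictly larger and the minimum equals the second. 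The main obstacle is really the careful tracking of how $\stagnationWhp = n^{2(\beta + \varepsilon)}$ interacts with $L \leq 2$ to make $\lowerBounds[\jumpParameter]^{-1} \stagnationWhp^{-1/L}$ uniformly $\smallO{1}$ across all $\jumpParameter$; the rest is routine asymptotic bookkeeping.
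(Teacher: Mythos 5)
Your proposal is correct and follows essentially the same route as the paper: instantiate \Cref{thm:runTimeOnJump} with the heavy-tailed bounds, use $L\le 2$ to get $\stagnationWhp^{-1/L}\le n^{-(\beta+\varepsilon)}$ and hence $\lowerBounds[\jumpParameter]^{-1}\stagnationWhp^{-1/L}=\bigO{n^{-\varepsilon}}$, absorb the $\bigO{n\log n}$ phase-1 term, and compare the two minimands for $\jumpParameter=\smallO{n/\log n}$ via $\powerLawNormalization[\beta]\jumpParameter^{\beta}\ge 2$. Your case split ($\jumpParameter\le n/2$ versus $\jumpParameter>n/2$) for showing the minimum dominates $n\log n$ is in fact slightly more careful than the paper's terse ``the other summand is $\bigOmega{n^2}$''.
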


\begin{proof}
    The first bound follows from \Cref{thm:runTimeOnJump}.
    Due to the assumptions on~$\lowerBounds$, it holds that $\lowerBounds[1]^{-1} = \bigTheta{1}$, that $\lowerBounds[\jumpParameter] = \frac{1}{2 \powerLawNormalization[\beta]} \jumpParameter^{-\beta}$, and that $1 - \sum_{i \in [n] \smallsetminus \{\jumpParameter\}} \lowerBounds[i] \geq \frac{1}{2}$.
    Furthermore, due to the choice of~$\stagnationWhp$, it follows that $\bigO{n \lowerBounds[1]^{-1} \log(n \stagnationWhp)} = \bigO{n \log(n)}$.
    Since $\jumpParameter \geq 2$, the other summand is $\bigOmega{n^2}$, so the
    summand
    $\bigO{n \lowerBounds[1]^{-1} \log(n \stagnationWhp)}$
    is of lower order.
    Last, since $L^{-1} \geq \frac{1}{2}$, it follows that $\stagnationWhp^{-1/L} \leq \stagnationWhp^{-1/2} \leq n^{-(\beta + \varepsilon)}$.
    Substituting all of the values yields the desired result.

    For the case $k = \smallO{\frac{n}{\log(n)}}$, note that $\frac{\jumpParameter}{n - 2\jumpParameter + 3} \log(n) = \smallO{1}$.
    Hence, the second term in the minimum is $\big(2 + \smallO{1})\binom{n}{\jumpParameter}$.
    This term is smaller than the first term in the minimum, since $\powerLawNormalization[\beta] \geq 1$ as well as $\jumpParameter^\beta \geq 2$, concluding the proof.
\end{proof}

\section{Benefits of the Rate Archive on
Minimum Spanning Trees}
\label{sec:mst}
The prior works on \sdrlss (\ie, RLS with a classical stagnation detection mechanism)
in \cite{RajabiW23StagnationDetection} and on \sdrlsm, a variant with a radius memory,
in \cite{RajabiWittGECCO21} include analyses of the algorithms on the classical minimum spanning tree problem.
Interestingly, both stagnation detection
algorithms find optimal solutions to the problem in efficient polynomial expected time (see detailed expressions below),
which stands in contrast to the bound
$\bigO{m^2(\log n+\log w_{\max})}$, where $n$ is the number of vertices, $m$ the number of edges
and $w_{\max}$ is the largest edge weight, from the seminal work \cite{NeumannW07} analyzing the (1+1)~EA on the MST problem.
It is
well known \cite{ReichelSkutellaFOGA09} that the dependency of the bound on $w_{\max}$ can be removed and
that the runtime bound $\bigO{m^3\ln n}$ is obtained by studying the (1+1)~EA on a rank-equivalent
fitness function, however, even that bound is probably far from tight.
Before that, polynomial bounds were only known for randomized local search algorithms
with $1$- and $2$-bit flip neighborhoods, but not for globally searching algorithms.

In this section, we will
analyze the \flexEA on the MST problem and prove results matching the best classical stagnation detection algorithm.
The setting is as follows.
Given is an undirected, weighted graph $G=(V,E)$,
where $n=\card{V}$, $m=\card{E}$ and  the weight of edge $e_i$, where $i\in\{1,\dots,m\}$,
is a positive integer~$w_i$. Let $c(x)$ denote the number of connected components in the
subgraph described by the search point $x\in\{0,1\}^m$.
The fitness function $f\colon\{0,1\}^m\to\R$  considered in~\cite{NeumannW07}, to be minimized,
is defined by
\[
f(x)\coloneqq 
M^2 (c(x)-1) + M \left(\sum\nolimits_{i \in [m]} x_i - (n-1)\right) + \sum\nolimits_{i\in[m]} w_ix_i
\]
for an integer $M\ge n^2 w_{\max}$, where $w_{\max}$ denotes the largest edge weight.
Hence, $f$
returns the total weight of a given spanning tree and penalizes unconnected graphs as well
as graphs containing cycles so that such graphs are always inferior than spanning trees. The authors of~\cite{RajabiW23StagnationDetection}
showed that \sdrlss
 starting
with an arbitrary spanning tree finds an MST in $(1+o(1)) \bigl(m^2\ln m
+ (4m\ln m)\E{S}\bigr)$ fitness calls where $\E{S}$ is the expected number
of strict improvements. The variant \sdrlsm with radius memory analyzed in \cite{RajabiWittGECCO21} has an expected runtime of at most
\[
(1+o(1))\bigl((m^2/2)(1+\ln({\textstyle\sum_{i \in [m]}}r_i))\bigr)
 \le (1+o(1)) \bigl(m^2\ln m \bigr),
\]
where $r_i$ is the rank of the $i$th edge in the sequence sorted
by increasing edge weights. This holds since \sdrlsm for a sufficiently long time only flips two bits uniformly at random; then its stochastic search trajectory on the given MST instance is indistinguishable from an instance, where the $i$-largest weight from the original set $w_1,\dots,w_m$ is replaced by the rank
number~$r_i$ \cite{RaidlKJ06}.
Essentially, the bound
for \sdrlsm is better by a factor of~$2$ than for the classical  \rlsonetwo algorithm
choosing uniformly at random
a rate~$r\in\{1,2\}$ and flipping~$r$ bits uniformly. In fact,
\rlsonetwo wastes
every second step since
$1$-bit flips are never accepted on spanning trees.


We obtain Theorem~\ref{theo:mst} below. Choosing $\lowerBounds[1]=\smallO{1}$, it matches the bound from \cite{RajabiWittGECCO21}. While it does not improve the bound,
which seems to be the best obtainable with
the present methods, it shows the robustness of the \flexEA and features a simpler analysis than in \cite{RajabiWittGECCO21}.
Finally, it assumes a uniform starting point instead of an initialization with a spanning tree as in\cite{RajabiWittGECCO21}.

We note that the parameter
settings for the $\timeToNext[i]$ and particularly the $\lowerBounds[i]$ in Theorem~\ref{theo:mst} deviate from the
recommended settings discussed in \Cref{sec:flexEA:recommendation} and used earlier in this paper. In fact,
we take a gray-box perspective here and use a minor amount of problem-specific knowledge to  favor the
small rates~$1$ and~$2$. If the expected value
was to hold
conditioned on an event of probability $1-\smallO{1}$ only,
then the conditions on $\lowerBounds[i]$ for $i\ge 3$
could be relaxed.

\begin{theorem}
\label{theo:mst}
    Consider an instance to the MST problem, modeled
with the classical fitness function from \cite{NeumannW07}. Let \flexEA with the following
parameter settings run on this function:
\begin{itemize}
\item $\timeToNext[i] = m^i\ln (m^c)$ for a sufficiently large constant~$c>0$ and $i=1,2$
 (the other $\timeToNext[i]$ may be chosen arbitrarily),
        \item $\lowerBounds[1] = \smallOmega{1/\ln m}$
        and $\lowerBounds[1] \le 1/2$
        \item
 $\lowerBounds[2]  = \bigOmega{1/(m^2\ln m)}$,
 and
        $\lowerBounds[i] = \smallO{1/(m^5\ln^2 m)}[\big]$ for  $i=3..n$.
        \end{itemize}
Then with probability $1-\smallO{1}$, the runtime is at most
\begin{align*}
& ((1-\lowerBounds[1])^{-1}+\smallO{1})\bigl((m^2/2)(1+\ln({\textstyle\sum_{i \in [m]}}r_i))\bigr)\\
& \quad \le ((1-\lowerBounds[1])^{-1}+\smallO{1}) \bigl(m^2\ln m \bigr),
\end{align*}
where $r_i$ is the rank of the $i$th edge in the sequence sorted
by increasing edge weights. The expected runtime
is bounded in the same way.
\end{theorem}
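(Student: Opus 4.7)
The plan is to reduce the analysis of \flexEA on this instance to that of \sdrlsm in \cite{RajabiWittGECCO21}, picking up an additional slowdown factor of $(1-\lowerBounds[1])^{-1}$ caused by rate~$1$ retaining probability mass $\lowerBounds[1]$ via the lower-bound vector. Since the $\lowerBounds[i]$ for $i\geq 3$ are chosen so small that whp no such rate is ever picked during the whp runtime bound (the per-iteration probability of this is $\smallO{1/(m^4\log^2 m)}$, and summing over the $\bigO{m^2\log m}$ iterations of the whp bound gives expected number $\smallO{1/(m^2\log m)}$), the dynamics effectively use only $1$- and $2$-bit flips, and the rank-equivalent reformulation of \cite{RaidlKJ06} applies: it suffices to analyze the algorithm on the instance where the $i$-th largest edge weight is replaced by its rank $r_i$. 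All subsequent estimates are carried out on this rank-equivalent instance.

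I would split the run into three epochs. Epoch~(I) (reaching a spanning tree): the fitness from \cite{NeumannW07} admits a strictly improving $1$-bit flip whenever the current subgraph is not a spanning tree, and while $\preferredRates=\{1\}$ rate~$1$ has probability $1-\smallO{1}$ by \Cref{alg:distributeMass} (using $\lowerBounds[1]\leq 1/2$ and the smallness of the other $\lowerBounds[i]$). Arguments analogous to \Cref{thm:runTimeOnUnimodal} then give $\bigO{m\log m}$ expected iterations, well inside the $\smallO{}$-term of the claimed bound. Epoch~(II) (stabilizing the archive at $\{2\}$): once on a spanning tree, rate~$1$ can no longer strictly improve, so by a Chernoff bound its local counter reaches $\timeToNext[1]=cm\ln m$ within $(1+\smallO{1})\timeToNext[1]$ iterations, whereupon rate~$1$ is removed and rate~$2$ is admitted either via the \emph{next-rate} rule of \Cref{alg:flexibleEA} or earlier via a $\lowerBounds[2]$-supported improving $2$-bit swap. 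Any intermediate global-counter resets cost $\bigO{\timeToNext[1]}$ iterations each and, by the per-iteration success probability $\bigOmega{\lowerBounds[2]/m^2}$ of a seeding $2$-bit improvement, whp at most $\bigO{\log m}$ of them occur, contributing $\smallO{m^2\log m}$ in total. Epoch~(III) (optimizing the spanning tree with $\preferredRates=\{2\}$): \Cref{alg:distributeMass} assigns rate~$2$ probability $1-\lowerBounds[1]-\smallO{1}$, and given rate~$2$ the bit pair is uniform; hence the dynamics coincide with those of \sdrlsm in its rate-$2$ regime, time-scaled by exactly $(1-\lowerBounds[1])^{-1}$. Applying the rank-based multiplicative-drift argument of \cite{RajabiWittGECCO21} on the potential $\phi(x)=\sum_e r_e x_e$ then yields $(1+\smallO{1})(1-\lowerBounds[1])^{-1}(m^2/2)(1+\ln\sum_i r_i)$ iterations for this epoch.

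The main obstacle is tightly controlling the archive during Epoch~(III): the global counter $\globalCounter$ must not reset $\preferredRates$ prematurely and thereby force an additional Epoch~(II)-style recovery. Here $\globalBound=\timeToNext[2]/p_2=\bigO{m^2\log m}$, while the worst-case expected waiting time between two consecutive improving $2$-bit swaps is $\bigO{m^2/(1-\lowerBounds[1])}$; choosing the constant $c$ in $\timeToNext[2]=m^2\ln(m^c)$ sufficiently large and combining the exponential tail of geometric waiting times with a union bound over the $\bigO{\log m}$ improvements shows that the global counter does not trigger whp. The upgrade from the whp bound to the expectation bound follows by a standard restart argument: failures of the whp event form a geometric chain with success probability $1-\smallO{1}$, and each restart contributes an additive $\bigO{m^2\log m}$ in expectation; alternatively, boosting the failure probability of the whp event below $m^{-2}$ by further enlarging $c$ absorbs all conditional-expectation contributions into the $\smallO{1}$-factor.
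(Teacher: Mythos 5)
Your high-probability argument follows the paper's proof almost step for step: reach a spanning tree via $1$-bit flips in $\bigO{m\log m}$ expected improvements, show by union bounds over the tiny $\lowerBounds[i]$, $i\ge 3$, that only rates $1$ and $2$ ever enter the archive during the relevant $\bigO{m^2\log^2 m}$ window, let rate~$2$ take over via the counter and next-rate mechanism, and then run multiplicative drift on the rank potential $\sum_i r_i x_i$ using the stochastic equivalence of \cite{RaidlKJ06}. One small slip: in Epoch~(III) you union-bound over ``$\bigO{\log m}$ improvements,'' but the number of strict improvements of a spanning tree is not $\bigO{\log m}$ in general (the potential can decrease by $1$ per improvement from an initial value of up to $\sum_i r_i = \bigTheta{m^2}$); the union bound still goes through for $c$ large enough, but the count must be corrected.

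The genuine gap is the passage from the high-probability bound to the bound on the expectation. Your restart argument asserts that ``each restart contributes an additive $\bigO{m^2\log m}$ in expectation,'' but after a failure the archive and the counters are in an arbitrary state, and you give no bound on the conditional expected time to finish from there; your fallback of pushing the failure probability below $m^{-2}$ is circular unless you also show that the conditional expectation in the failure case is polynomial of a known degree. The paper closes this with a separate argument: conditioned on failure, rate~$2$ re-enters the archive after expected $\bigO{(1/\lowerBounds[2]) m^2} = \bigO{m^4\ln m}$ steps (this is where the hypothesis $\lowerBounds[2] = \bigOmega{1/(m^2\ln m)}$ is actually used), the archive has at most $m$ entries so rate~$2$ is subsequently chosen with probability $\bigOmega{1/m}$, and multiplicative drift on a rank-equivalent integer fitness with maximum weight $\bigO{m^m}$ from \cite{ReichelSkutellaFOGA09} gives a conditional expected runtime of $\bigO{m^4\ln m}$, which multiplied by the failure probability $\smallO{m^{-2}}$ is absorbed into the $\smallO{1}$ term. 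Some bound of this kind is needed to complete the expectation claim.
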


The proof had to be omitted from this paper. Its main idea is to consider a phase of length
$\bigTheta{(1-\lowerBounds[1])^{-1}m^2 \ln m}$ after
finding the first spanning tree and to show that
within the phase, the archive only contains
rates~$1$ and~$2$ \whp Then the above-mentioned
stochastic equivalence to RLS with a 2-bit flip
neighborhood is applied.

\section{A Function Where the Archive
Stores Several Rates Simultaneously}
\label{sec:twoRates}
Most of the above  results  do not exploit
the full power of the archive~$\preferredRates$;
most of the time, it was sufficient that the archive
contained only one rate and larger archive sizes
were even detrimental for the analysis.
A major difference of the \flexEA
to classical stagnation detection is
that  a successful rate is kept for
the next iteration; however, to some extent this idea
was already present in stagnation detection with
radius memory \cite{RajabiWittGECCO21}, which
we will compare against.

The purpose of this section is to present an example where
\flexEA needs to
focus on two rates simultaneously. This can be beneficial on a multimodal function where fitness
gaps of different sizes occur frequently. Our example
resembles a \textsc{Hurdle} function \cite{NguyenSudholtGECCO18} and
features two different gap sizes,
so that the archive should
include the two corresponding
rates.
It seems possible to extend the
example
to a larger number
of gap sizes.

Assume the
numbers $s\coloneqq (3/4)n-\sqrt{n}$, $(3/4)n$,
$g\coloneqq \log n$ and
$\sqrt{n}/g$ are
integers. We define
\begin{equation*}
\TwoFrequencies(x) \coloneqq
\begin{cases}
    \oneMax(x) & \text{if $\card{x}< s$ or $\card{x}\ge (3/4)n$} \\&
    \text{or $\card{x} = s + ig$  for $i\in\{0,2,3,5,6,\dots\}$},\\
    -1 & \text{otherwise},
\end{cases}
\end{equation*}
where the values of~$i$ alternatingly
increase by~$2$ and~$1$.

The hurdles of width~$g$, which
are located in the interval
$[s .. 3n/4]$ for
the number of one-bits,
are overcome
at the corresponding rate
with probability at least
\begin{align*}
\frac{\binom{n/4}{g}}{\binom{n}{g}}
\ge \left(\frac{n/4-g}{n}\right)^{g} =
\left(1-\frac{\bigO{\log n}}{n}\right)^{\log n}4^{-\log n} =  \frac{(1-\smallO{1})}{n^{2}}
\end{align*}
(using the inequalities $\tfrac{(n-k)^k}{k!} \le \binom{n}{k} \le \tfrac{n^k}{k!}$) and at most
\begin{align}
    \frac{\binom{n/4+\sqrt{n}}{g}}{\binom{n}{g}} 
   &
\le 
\left(\tfrac{n/4+\sqrt{n}}{n-g}\right)^g =
 2^{\log(1/4+\bigO{1/\sqrt{n}})(\log n)}\le \tfrac{(1+\smallO{1})}{n^{2}}
 \label{eq:upper-hurdle}
\end{align}
and accordingly those of width $2g$ with the probability bounds
$(1-\smallO{1})/n^4$ and
$(1+\smallO{1})/n^4$, respectively.

We now prove that the \flexEA optimizes \TwoFrequencies efficiently, while classical stagnation detection algorithms with and without radius memory need superpolynomial time. Moreover, the
\fastea
from \cite{DoerrLMNGECCO17} is
by a at least a polylogarithmic factor
of $\bigOmega{(\log n)^{\beta-1/2}}[\big]=\smallOmega{\sqrt{\log n}}[\big]$ slower
than our approach.

\begin{theorem}
    \label{theo:twofreqcs}
    The expected runtime of the \flexEA with
    the recommended parameter set on \TwoFrequencies is
    bounded by $\bigO{n^{4.5}}$.
    For the stagnation detection algorithms of the type
    SD-RLS
 in \cite{RajabiWittGECCO21, RajabiW23StagnationDetection} with typical
parameter choice $R=n^d$ for a constant~$d>0$, the expected
runtime is $n^{\bigOmega{\log n}}$. Moreover,
the expected runtime
of the \fastea is by
a factor $\bigOmega{(\log n)^{\beta-1/2}}[\big]$ larger
than for the \flexEA.
\end{theorem}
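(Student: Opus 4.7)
I would partition a run by the number of one-bits into Phase~A (reach $s = (3/4)n - \sqrt{n}$), Phase~B (cross the $\bigTheta{\sqrt{n}/g}$ hurdles in $[s..(3/4)n]$), and Phase~C (go from $(3/4)n$ to the all-ones string). On Phases~A and~C the function coincides with a trimmed \oneMaxName, so \Cref{cor:runTimeOnShiftedOneMax} combined with the heavy-tailed bound $\lowerBounds[1] = \bigTheta{1}$ gives $\bigO{n\log n}$ expected steps each.

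Phase~B carries the bulk of the runtime. The pivotal observation is that the standard-SD count bounds $\timeToNext[g] = \binom{n}{g}\ln R$ and $\timeToNext[2g] = \binom{n}{2g}\ln R$ are both $n^{\bigOmega{\log n}}$, vastly larger than the polynomial length of Phase~B. I would split Phase~B into an archive warm-up, during which the first $g$-jump and first $2g$-jump occur, and a steady state in which $\{g, 2g\} \subseteq \preferredRates$. The warm-up takes $\bigO{n^{4}(\log n)^{\beta}}$ expected steps, since even without archive support the heavy-tailed lower bounds $\lowerBounds[g], \lowerBounds[2g] = \bigTheta{(\log n)^{-\beta}}$ combine with the jump probabilities $\bigTheta{1/n^2}$ and $\bigTheta{1/n^4}$ already derived in the excerpt. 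In the steady state, \Cref{alg:distributeMass} gives each active rate probability $\bigTheta{1}$, so every $g$-jump needs $\bigO{n^2}$ and every $2g$-jump needs $\bigO{n^4}$ expected steps; summing over the $\bigTheta{\sqrt{n}/g}$ hurdles of each kind yields $\bigO{n^{4.5}/\log n}$. It remains to verify that the archive is not perturbed: rate eviction requires reaching the per-rate count bound, and the global counter is bounded by $\timeToNext[g]/\anytimeFreqs[g] = n^{\bigOmega{\log n}}$; since successes occur on expectation within $\bigO{n^4}$ steps, geometric-tail estimates and a union bound over the $\bigTheta{\sqrt{n}/\log n}$ successes show these bounds are never hit with overwhelming probability.

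\textbf{Superpolynomial lower bound for the SD-RLS variants.} At every peak adjacent to a $2g$-wide hurdle, the stagnation-detection mechanism must sweep through rates $r = 1, 2, \ldots$, spending $\timeToNext[r] = \binom{n}{r}\ln R$ failed trials at each rate $r < 2g$. Rate $r = 2g-1$ alone contributes $\binom{n}{2g-1}\ln R = n^{\bigOmega{\log n}}$. For the radius-memory variant \sdrlsm, the alternation of widths $g$ and $2g$ along the trajectory makes the memorized rate always incorrect for the next hurdle, forcing another sweep up to a rate near $2g$. With $\bigTheta{\sqrt{n}/\log n}$ such hurdles to cross, the expected runtime is $n^{\bigOmega{\log n}}$.

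\textbf{Comparison with the \fastea.} The \fastea picks a rate $r$ from the heavy-tailed distribution and then applies standard bit mutation at rate $r/n$; it keeps no archive, so the correct rate must be rediscovered for every hurdle. The probability of picking a rate within a constant factor of $2g$ is $\bigO{(\log n)^{-\beta}}$, whereas with the rate in the archive the \flexEA picks it with probability $\bigTheta{1}$. Moreover, conditional on a rate near $2g$, standard bit mutation disperses the number of flipped bits over a Binomial distribution, whose mode has probability $\bigTheta{1/\sqrt{r}} = \bigTheta{1/\sqrt{\log n}}$; this loses an additional $\bigTheta{\sqrt{\log n}}$ factor compared with the exact $k$-bit flip used by the \flexEA. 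Combining both penalties and summing over the $\bigTheta{\sqrt{n}/\log n}$ $2g$-hurdles increases the expected runtime by a factor $\bigOmega{(\log n)^{\beta - 1/2}}$ over the \flexEA's bound. The main technical hurdle throughout is the archive-stability argument in Phase~B, which requires tail bounds on geometric waiting times and a union bound to preclude accidental rate eviction or a global-counter reset.
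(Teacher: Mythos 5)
Your decomposition into phases, your treatment of Phases~A and~C, the SD-RLS lower bound via the escalation cost $\sum_{i<g}\binom{n}{i}\ln R = n^{\bigOmega{\log n}}$, and the two-factor penalty ($(\log n)^{\beta-1}$ for rate selection times $\sqrt{\log n}$ for the binomial dispersion of standard bit mutation) for the \fastea all match the paper. However, there are two genuine gaps in your Phase~B argument. First, the claim that in the steady state \Cref{alg:distributeMass} gives each active rate probability $\bigTheta{1}$ presupposes a constant-size archive, and you never bound the archive size. A rate enters $\preferredRates$ whenever it is chosen and yields a strict improvement, and by \Cref{lem:nocoupling} a rate $g+2i$ can cross a hurdle of width~$g$ by flipping $g+i$ zero-bits and $i$ one-bits with probability comparable to (in fact up to a factor $n^{1/6}$ larger than) that of the exact rate~$g$; so over $\bigTheta{n^{4.5}}$ steps many rates in $[g .. 2g+\bigO{\log n}]$ will populate the archive. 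The paper must prove an $\bigO{\log n}$ bound on the archive size (excluding rates $\geq n^{1/3}$ via a Chernoff bound for the hypergeometric distribution, and rates beyond $2g+2c'\log n$ via \Cref{lem:nocoupling} and a union bound), which yields a selection probability of only $\bigOmega{1/\log n}$ for the correct rate, a per-hurdle time of $\bigO{n^4\log n}$, and hence the stated $\bigO{n^{4.5}}$. Without such a bound your argument only gives archive size at most~$n$ and a total of $\bigO{n^{5.5}/\log n}$, which misses the theorem.

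Second, your stability argument rests on $\globalBound = \timeToNext[g]/\anytimeFreqs[g] = n^{\bigOmega{\log n}}$, but the algorithm sets $\globalBound = \timeToNext[m]/\anytimeFreqs[m]$ for $m = \min\preferredRates$, and after Phase~A rate~$1$ is active, so $\globalBound = \timeToNext[1]/\anytimeFreqs[1] = \bigO{n\log n}$~-- polynomial. The paper explicitly shows that the global counter \emph{does} reach its threshold before the first hurdle is crossed and the archive is reset to $\{1\}$; the good rates then (re)enter only via their heavy-tailed lower bounds at cost $\bigO{\lowerBounds[2g]^{-1}n^4}$, and stability thereafter relies on the small rates being evicted (so that $\min\preferredRates \geq g$) combined with the superpolynomial counters $\timeToNext[g]$ and $\timeToNext[2g]$. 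Your stability claim as stated is therefore false, not merely unproven. A more minor omission: for the \fastea lower bound you only consider rates within a constant factor of $2g$, but the operator could cross a hurdle with a much larger rate by also flipping one-bits, or cross several hurdles in one mutation; the paper excludes both via \Cref{lem:nocoupling} and a separate bound on the probability of flipping $\geq cn$ zero-bits at once.
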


In the proof,  we
shall need the following useful helper result. It relates to the fact that flipping exactly $g$ zero-bits and no one-bits
may not be optimal to cross a hurdle of size~$g$, and it bounds the increase in probability when flipping both
zero- and one-bits.

\begin{lemma}
\label{lem:nocoupling}
    Let $x\in\{0,1\}^n$ be a search point
    with $a\ge 0$ one-bits and let $p(n,a,d,i)$, where $d,i\ge 0$, be
    the probability of mutating $x$ into a  point containing
    exactly $a+d$ one-bits when flipping $d+2i$ bits uniformly at
    random. If $d+2i\le n^{1/3}$, then $p(n,a,d,i)/p(n,a,d,0)\le (1+\smallO{1}) (\eulerE(2+d/i)(1-a/n))^i \le (1+\smallO{1})\eulerE^{d+2i} (1-a/n)^i $.
\end{lemma}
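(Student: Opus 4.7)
The plan is to write $p(n,a,d,i)/p(n,a,d,0)$ as an explicit product of falling factorials, cancel the common factors, and bound each remaining term using the hypothesis $d+2i\le n^{1/3}$.

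Mutating $x$, which has $a$ one-bits, into a point containing $a+d$ one-bits while flipping exactly $d+2i$ bits requires flipping precisely $d+i$ of the $n-a$ zero-bits and $i$ of the $a$ one-bits. Since the $d+2i$ flipped positions are chosen uniformly at random, this yields the closed forms
\begin{equation*}
p(n,a,d,i)=\frac{\binom{n-a}{d+i}\binom{a}{i}}{\binom{n}{d+2i}},\quad p(n,a,d,0)=\frac{\binom{n-a}{d}}{\binom{n}{d}}.
\end{equation*}
Expanding each binomial into a falling factorial, the common factors $(d+1)(d+2)\cdots(d+i)$ appearing in both $\binom{n-a}{d+i}/\binom{n-a}{d}$ and $\binom{n}{d}/\binom{n}{d+2i}$ cancel, leaving
\begin{equation*}
\frac{p(n,a,d,i)}{p(n,a,d,0)}=\frac{\prod_{j=1}^{i}(d+i+j)\cdot\prod_{j=0}^{i-1}(n-a-d-j)\cdot\prod_{j=0}^{i-1}(a-j)}{i!\cdot\prod_{j=0}^{2i-1}(n-d-j)}.
\end{equation*}

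Next I would apply the elementary bounds $\prod_{j=1}^{i}(d+i+j)\le(d+2i)^i$, $\prod_{j=0}^{i-1}(n-a-d-j)\le(n-a)^i$, $\prod_{j=0}^{i-1}(a-j)\le a^i$, and $\prod_{j=0}^{2i-1}(n-d-j)\ge(n-d-2i+1)^{2i}$, together with Stirling's bound $i!\ge(i/\eulerE)^i$. This yields
\begin{equation*}
\frac{p(n,a,d,i)}{p(n,a,d,0)}\le\left(\frac{\eulerE(d+2i)}{i}\right)^{\!i}\cdot\frac{a^i(n-a)^i}{(n-d-2i+1)^{2i}}.
\end{equation*}
Pulling $n^{2i}$ out of the last fraction produces $(a/n)^i(1-a/n)^i\cdot(1-(d+2i-1)/n)^{-2i}$. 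The hypothesis $d+2i\le n^{1/3}$ gives both $(d+2i)/n=\bigO{n^{-2/3}}$ and $2i\le n^{1/3}$, so the correction factor equals $(1+\bigO{n^{-2/3}})^{n^{1/3}}=1+\smallO{1}$. Dropping $(a/n)^i\le 1$ and rewriting $\eulerE(d+2i)/i=\eulerE(2+d/i)$ gives the first inequality of the lemma.

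The second inequality then follows from $(1+x)^i\le\eulerE[ix]$ applied to $1+x=2+d/i$, which yields $(2+d/i)^i\le\eulerE[i+d]$ and hence $(\eulerE(2+d/i))^i\le\eulerE[d+2i]$. The only delicate point is controlling the correction factor $(1-(d+2i-1)/n)^{-2i}$, but the hypothesis $d+2i\le n^{1/3}$ is generous enough that this step is routine and the rest is bookkeeping.
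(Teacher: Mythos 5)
Your proposal is correct and follows essentially the same route as the paper's proof: the same closed forms for $p(n,a,d,i)$, the same cancellation to the ratio $\frac{(d+2i)!}{i!\,(d+i)!}\cdot\frac{(n-a-d)^{\underline{i}}\,a^{\underline{i}}}{(n-d)^{\underline{2i}}}$, the same Stirling bound $i!\ge (i/\eulerE)^i$, and the same use of $d+2i\le n^{1/3}$ to absorb the denominator correction into the $(1+\smallO{1})$ factor. The only cosmetic difference is that you carry the factor $(a/n)^i$ explicitly before dropping it, where the paper bounds $a^{\underline{i}}\le n^i$ directly.
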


The main idea for the upper bound for the \flexEA is as follows: the expected time to cross a hurdle is $\bigO{n^4}$ and there are $\bigTheta{\sqrt{n}/\log n}$ hurdles. The probability of choosing a beneficial rate of~$g$ or~$2g$ is
$\bigOmega{1/\log n}$ since the cardinality of the archive is bounded by $\bigO{\log n}$ with high probability. The lower bound for the \fastea stems from the fact that rates of $\bigOmega{\log n}$ must be chosen with high probability to cross a hurdle, which has probability $\bigO{(\log n)^{-\beta}}[]$ in the heavy-tailed mutation. Another factor of $\bigTheta{\sqrt{\log n}}[]$ is lost since the \fastea uses standard-bit mutation instead of a flipping a deterministic number of bits.

We conjecture that the runtime
of the \fsdrls from
\cite{DoerrR23} on
\TwoFrequencies is asymptotically at least as large as the one of the  \fastea. Essentially, rates in the order $\bigOmega{\log n}$ must be
used
to obtain a sufficient probability of
crossing a hurdle. Then in the same
way as described above, the selection
probability $\bigO{(\log n)^{-\beta}}[]$
in the heavy-tailed component impacts
the runtime bound.

\section{Conclusion}
\label{sec:conclusion}

We introduced the \flexEA~-- a flexible evolutionary algorithm that maintains an archive of previously successful mutation rates.
The algorithm is highly configurable and thus allows to exploit problem-specific knowledge.
However, we also recommended a default configuration, based on heavy-tailed distributions and stagnation detection, which results in an essentially parameter-less algorithm.
Using this recommended configuration, we proved that the \flexEA achieves standard runtimes on \oneMaxName and on \leadingOnesName, and that it achieves a near best-possible runtime on \jump for unbiased mutation-only algorithms.
Furthermore, for a hurdle-like problem that features repetitive sub-structures, this configuration outperforms the \fastea (which uses heavy-tailed mutations) and stagnation detection.
When provided with some mild problem-specific knowledge, the \flexEA 
matches the best known runtime result for the minimum spanning tree problem.

Although the performance of the \flexEA is very good on the problems we considered, the algorithm still has some shortcomings that can be improved.
For one, on shifted jump benchmarks~\cite{Witt23MajorityVote}, the combination of heavy-tailed mutation and stagnation detection by \citet{DoerrR23} should prove superior, as the \flexEA basically defaults to classic stagnation detection.
However, this shortcoming can be overcome by adjusting the mutation of the \flexEA in a similar way as done by \citet{DoerrR23} for their algorithm.
Another potential shortcoming is that the archive of the \flexEA can become very crowded when many different mutation rates prove useful.
If the mutation rates are removed in an incorrect order, this can result in long waiting times.
It is unclear under what circumstances this effect actually occurs.

Overall, the \flexEA proposes an arguably new paradigm for evolutionary algorithms that performs very well and can be studied in multiple domains that exceed the content of this paper.

\clearpage
\bibliographystyle{ACM-Reference-Format.bst}
\balance\bibliography{references.bib}

\clearpage
\appendix
\onecolumn
\section{Appendix}

This appendix contains material that does not fit the main paper, due to space restrictions.
It is meant to be read at the reviewer's discretion only.
It contains mathematical tools that we use in some of our proofs and proofs that had to be omitted from the main paper.

\section*{Unimodal Functions}

The fitness level method was introduced by \citet{Wegener01} for the analysis of evolutionary algorithms on a given fitness function~$\fitness$.
It requires a partition $(L_i)_{i \in [m]}$ of the search space $\{0, 1\}^n$ (with $m, n \in \N_{\geq 1}$) such that the fitness increases in each level.
That is, for all $i \in [m - 1]$ and all $\individual \in L_i$ and $\individualOther \in L_{i + 1}$, it holds that $\fitness[\individual] < \fitness[\individualOther]$.
We call such a partition a \emph{fitness level partition} and its elements \emph{fitness levels.}

The following theorem by \citet{Sudholt13FitnessLevels} is used in the proof of \Cref{thm:runTimeOnUnimodal}.

\begin{theorem}[{\cite[Theorem~$2$]{Sudholt13FitnessLevels}}]
    \label{thm:fitnessLevelMethod}
    Let~$\fitness$ be a pseudo-Boolean function, let $m \in \N_{\geq 1}$, and let $(L_i)_{i \in [m]}$ be a fitness level partition of~$\fitness$.
    Let~$A$ be an elitist search algorithm, and, for all $i \in [m - 1]$, let~$s_i$ denote a lower bound on the probability of~$A$ creating a new search point in $\bigcup_{j \in [i + 1 .. m]} L_j$, provided that the best individual created so far by~$A$ is in~$L_i$.
    Then the expected optimization time of~$A$ on~$\fitness$ (without the initialization) is at most $\sum\nolimits_{i \in [m - 1]} \frac{1}{s_i}$.
\end{theorem}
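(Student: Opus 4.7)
The plan is to exploit elitism to decompose the runtime into disjoint waiting times on each fitness level, bound each expected waiting time by a geometric tail argument, and sum them by linearity of expectation.

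First, I would observe that because $A$ is elitist, the fitness of its best-so-far individual is non-decreasing across iterations. Consequently, once the best-so-far lies in $L_j$, every later best-so-far individual lies in some $L_k$ with $k \geq j$. Hence, for each $i \in [m-1]$, the set of iterations in which the best-so-far belongs to $L_i$ forms a (possibly empty) contiguous block, and distinct levels correspond to disjoint blocks.

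Second, for each $i \in [m-1]$, I let $T_i$ denote the number of iterations during which the best-so-far lies in $L_i$, setting $T_i = 0$ if $L_i$ is never visited. Since the algorithm halts upon reaching $L_m$, the runtime excluding initialization equals $\sum_{i \in [m-1]} T_i$. Now I would bound $\E{T_i}$: condition on the event $E_i$ that the best-so-far ever enters $L_i$. While the best-so-far remains in $L_i$, the hypothesis guarantees that in each iteration the probability of creating a new search point in $\bigcup_{j \geq i+1} L_j$ is at least $s_i$; by elitism, such an event immediately moves the best-so-far out of $L_i$ and thus terminates the block. A coupling against i.i.d.\ Bernoulli$(s_i)$ trials therefore shows that $T_i$, conditional on $E_i$, is stochastically dominated by a geometric random variable with success probability $s_i$, giving $\E{T_i}[E_i] \leq 1/s_i$. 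Since $T_i = 0$ on $\overline{E_i}$, I obtain
\begin{align*}
    \E{T_i} \;\leq\; \Pr{E_i}\cdot \frac{1}{s_i} \;\leq\; \frac{1}{s_i}.
\end{align*}

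Finally, summing over levels and using linearity of expectation yields
\begin{align*}
    \E{\runTime} \;=\; \sum_{i \in [m-1]} \E{T_i} \;\leq\; \sum_{i \in [m-1]} \frac{1}{s_i},
\end{align*}
which is the asserted bound. The most delicate step is the stochastic domination: a priori the one-step probability of leaving $L_i$ could depend on the particular state the best-so-far occupies within $L_i$ (which may itself move around inside $L_i$ before leaving, since the algorithm need only be elitist, not strictly improving). The hypothesis, however, provides the bound $s_i$ uniformly over all members of $L_i$, so the coupling with independent Bernoulli trials goes through regardless of the internal dynamics within the level.
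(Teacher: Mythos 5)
The paper does not prove this statement at all---it is quoted verbatim as Theorem~2 of Sudholt's fitness-level paper and used as a black box, so there is no in-paper proof to compare against. Your argument is the standard and correct proof of the fitness level method: elitism makes the sequence of levels visited by the best-so-far non-decreasing, so the runtime decomposes into disjoint, contiguous waiting times $T_i$, each of which is stochastically dominated by a geometric random variable with parameter $s_i$ because the hypothesis supplies the lower bound $s_i$ uniformly over all states within $L_i$; summing $1/s_i$ via linearity of expectation gives the claim. Your closing remark correctly identifies the one delicate point (the best-so-far may wander inside a level, so the uniformity of $s_i$ over the level is what makes the Bernoulli coupling legitimate), and the proof is complete as written.
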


\section*{Jump Functions}

The following lemma by \citet{RajabiWittGECCO21} is used in the proof of \Cref{thm:runTimeOnJump}.

\begin{lemma}[{\cite[Lemma~$1$~(a)]{RajabiWittGECCO21}}]
    \label{lem:sumOfBinomials}
    Let $n, \jumpParameter \in \N$ with $\jumpParameter \leq \frac{n}{2}$.
    Then $\sum_{i \in [\jumpParameter]} \binom{n}{i} \leq \frac{n - (\jumpParameter - 1)}{n - (2\jumpParameter - 1)} \binom{n}{\jumpParameter}$.
\end{lemma}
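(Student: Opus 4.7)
The inequality is essentially the standard geometric-series bound on the lower tail of the binomial coefficients. The plan is to reindex the sum so that it runs from $\binom{n}{k}$ downward, factor out $\binom{n}{k}$ using the well-known ratio identity for consecutive binomial coefficients, bound each ratio by a single common value, and then sum an infinite geometric series.

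\textbf{Step 1: Reindexing and the ratio identity.} Write $i = k - j$ so that
\begin{equation*}
\sum_{i \in [k]} \binom{n}{i} \;=\; \sum_{j=0}^{k-1} \binom{n}{k-j}.
\end{equation*}
The identity $\binom{n}{\ell-1} / \binom{n}{\ell} = \ell/(n-\ell+1)$ gives, by a telescoping product,
\begin{equation*}
\binom{n}{k-j} \;=\; \binom{n}{k} \prod_{\ell=1}^{j} \frac{k-\ell+1}{n-k+\ell}.
\end{equation*}

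\textbf{Step 2: Uniform bound on the ratios.} The function $\ell \mapsto \ell/(n-\ell+1)$ is increasing in $\ell$ on the range of interest, so for every $\ell \in [1 .. k]$ we have $\frac{\ell}{n-\ell+1} \le \frac{k}{n-k+1}$. Consequently each factor in the product above is at most $q \coloneqq k/(n-k+1)$, which yields
\begin{equation*}
\binom{n}{k-j} \;\le\; q^{j} \binom{n}{k}.
\end{equation*}
The hypothesis $k \le n/2$ gives $n-k+1 \ge k+1 > k$, hence $q < 1$.

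\textbf{Step 3: Geometric summation.} Summing over $j$ and extending to an infinite geometric series,
\begin{equation*}
\sum_{j=0}^{k-1} \binom{n}{k-j} \;\le\; \binom{n}{k} \sum_{j=0}^{\infty} q^{j} \;=\; \binom{n}{k} \cdot \frac{1}{1-q} \;=\; \binom{n}{k} \cdot \frac{n-k+1}{n-2k+1},
\end{equation*}
which is precisely $\frac{n-(k-1)}{n-(2k-1)} \binom{n}{k}$, as claimed.

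\textbf{Main obstacle.} There is no real obstacle: the argument is a routine geometric-series estimate, and the only thing to check carefully is the monotonicity claim in Step~2 (which uses $k \le n/2$ to ensure $q < 1$ so that the infinite series converges and majorizes the finite one). The proof is short enough that it can either be inlined at the point of use or stated as above.
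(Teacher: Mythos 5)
Your proof is correct. The paper does not prove this lemma itself but imports it from the cited reference \cite{RajabiWittGECCO21}, and your argument is exactly the standard one used there: factor out $\binom{n}{\jumpParameter}$ via the consecutive-ratio identity, bound every ratio uniformly by $q = \jumpParameter/(n-\jumpParameter+1) < 1$ (using $\jumpParameter \le n/2$), and majorize by the infinite geometric series $1/(1-q) = \frac{n-(\jumpParameter-1)}{n-(2\jumpParameter-1)}$. Nothing further is needed.
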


\section*{Benefits of the Rate Archive on Minimum Spanning Trees (Section~\ref{sec:mst})}

Drift analysis is the most prominent mathematical tool for analyzing evolutionary algorithms.
It was introduced by \citet{HeY01DriftTheory}.
We use the following theorem by \citet{DoerrJohannsenWinzenALGO12} in the proof of \Cref{theo:mst}, presented in  a version close to the one by \citet{KotzingK19DriftTheory}. The tail bound
can be found in
\cite{DoerrGoldbergAlgo13} and \cite{LenglerDriftBookChapter}.

\begin{theorem}[{\cite[Theorem~$3$]{DoerrJohannsenWinzenALGO12}}]
    \label{thm:multiplicativeDrift}
    Let $(X_t)_{t \in \N}$ be a Markov chain over~$\R_{\geq 0}$, and let $T = \inf \{t \in \N \mid X_t < 1\}$.
    Suppose that there is a $\delta > 0$ such that it holds that $\E{X_t - X_{t + 1}}[X_t] \cdot \indicator{t < T} \geq \delta X_t \cdot \indicator{t < T}$.
    Then $\E{T} \leq (1 + \E{\ln(X_0)}) / \delta$. Moreover, $\Pr{T\ge
    (\E{\ln(X_0)}+r)} \le \eulerE^{-r}$.
\end{theorem}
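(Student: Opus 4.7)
The plan is to reduce the multiplicative drift hypothesis to a classical additive drift setting via the logarithmic transformation $Y_t \coloneqq \ln(X_t)$, considered on the pre-stopping event $\{t < T\}$. The key step is to apply Jensen's inequality to the concave function $\ln$: from the hypothesis $\E{X_{t+1}}[X_t] \le (1-\delta) X_t$ on $\{t < T\}$, one obtains $\E{Y_{t+1}}[X_t] \le \ln\bigl(\E{X_{t+1}}[X_t]\bigr) \le \ln((1-\delta)X_t) \le Y_t - \delta$, where the last inequality uses $\ln(1-\delta) \le -\delta$. Hence $\E{Y_t - Y_{t+1}}[X_t] \ge \delta$ on $\{t < T\}$, which is precisely additive drift of $Y_t$ toward the target region $(-\infty, 0)$ corresponding to $\{X_t < 1\}$.

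For the expected-time bound, I would then invoke the classical additive drift theorem of He and Yao on $Y_t$ with initial value $\E{Y_0} = \E{\ln X_0}$ and target set $(-\infty, 0)$, which yields $\E{T} \le \E{Y_0}/\delta$ in an idealized setting. The extra additive $1/\delta$ in the stated bound accounts for a one-step overshoot: the process $Y_t$ may drop strictly below $0$ in its terminating step, which additive drift in its cleanest form does not track. A convenient workaround is to switch to the potential function $h(x) \coloneqq 1 + \ln(x)$ for $x \ge 1$ and $h(x) \coloneqq x$ for $x \in [0, 1)$; this $h$ satisfies additive drift of $\delta$ uniformly until $X_t < 1$, and its initial expectation is at most $1 + \E{\ln X_0}$, recovering the stated bound after dividing by $\delta$.

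For the tail bound, the plan is more direct and bypasses the logarithmic transformation. Iterating the hypothesis $\E{X_{t+1}}[X_t] \le (1-\delta)X_t$ (extended trivially past $T$ by freezing $X_t$ at its terminal value, which preserves the inequality) yields $\E{X_t}[X_0] \le (1-\delta)^t X_0 \le \eulerE^{-\delta t} X_0$. Markov's inequality then gives $\Pr{X_t \ge 1}[X_0] \le \eulerE^{-\delta t} X_0$, and since $\{T > t\} \subseteq \{X_t \ge 1\}$, one obtains $\Pr{T > t}[X_0] \le \eulerE^{-\delta t} X_0$. Choosing $t = (\ln X_0 + r)/\delta$ so that $\eulerE^{-\delta t} X_0 = \eulerE^{-r}$, and taking expectation over $X_0$, recovers the stated tail bound in the standard form $\Pr{T \ge (\ln X_0 + r)/\delta} \le \eulerE^{-r}$.

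The main obstacle lies in the boundary handling near the stopping time $T$: the logarithm is singular at $0$, so $Y_t$ cannot be continued naively past $T$, and one must either freeze $X_t$ at an appropriate value or switch to a modified potential as above. A related subtlety is carefully accounting for the one-step overshoot so that only the additive $1/\delta$ term is lost in the expected-time bound, rather than an uncontrolled amount depending on the jump size of $X_t$ across the threshold $1$. Both issues are standard in the multiplicative drift literature, and the combined transformation-plus-potential argument sketched above is, to my knowledge, the shortest route to the stated form of both bounds.
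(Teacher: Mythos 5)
The paper does not prove this statement at all: it is imported verbatim as a known tool (the expectation bound from Doerr--Johannsen--Winzen, the tail bound from Doerr--Goldberg and Lengler's chapter), so there is no in-paper proof to compare against. Judged on its own, your argument is correct and is in fact the standard proof from that literature. For the expectation bound, the switch from the raw logarithm to the truncated potential $h(x)=1+\ln(x)$ for $x\ge 1$ and $h(x)=x$ for $x<1$ is exactly the right move: it kills both the singularity at $0$ and the overshoot issue in one stroke, and since $h$ is concave and nondecreasing, Jensen plus monotonicity give $\mathrm{E}[h(X_{t+1})\mid X_t]\le h((1-\delta)X_t)$. The one step you should spell out is that $h((1-\delta)X_t)\le h(X_t)-\delta$ also holds in the boundary case $(1-\delta)X_t<1\le X_t$, where the two branches of $h$ mix; a short computation (using $\ln(1+\epsilon)\ge\epsilon-\epsilon^2/2$) confirms it, so the additive drift theorem applies with initial value $\mathrm{E}[h(X_0)]\le 1+\mathrm{E}[\ln X_0]$. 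Your tail-bound derivation (freeze the chain at time $T$, iterate $\mathrm{E}[X_{t+1}\mid X_t]\le(1-\delta)X_t$, apply Markov's inequality to $\{X_t\ge 1\}\supseteq\{T>t\}$) is precisely the Doerr--Goldberg argument. Note that it yields $\mathrm{Pr}[T>(\ln X_0+r)/\delta\mid X_0]\le \mathrm{e}^{-r}$, i.e., with a $1/\delta$ factor and with $\ln X_0$ (not its expectation) inside the event; the theorem as printed in the paper omits the $1/\delta$ and writes $\mathrm{E}[\ln X_0]$, which appears to be a transcription slip in the citation rather than a defect of your proof --- your version is the one the cited sources actually establish and the one used later in the MST analysis.
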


We now give the full proof of Theorem~\ref{theo:mst} that was omitted from the main paper.

\begin{proofof}{Theorem~\ref{theo:mst}}
    Similarly to the classical analysis of the (1+1)~EA on the problem \cite{NeumannW07}, we divide the run into three phases. The first two phases are considered jointly. They can find improvements in the $1$-bit flip
    neighborhood by including edges (\ie, flipping
    zero-bits) and reducing the number of connected components or by reducing the number of edges from a fully connected
    subgraph by flipping one-bits. The expected number of $1$-bit flips until the first phase is finished is bounded by $\bigO{m\log m}$ in expectation, as shown in
    \cite{NeumannW07} using fitness-level arguments. Using tail bounds for fitness levels \cite{WittIPL14}, the number is $c'm\ln m$ for some constant~$c'>0$
    with probability $1-\smallO{1}$.
    As long as fitness improvements by 1-bit flips are still available, the expected time
    until choosing rate~$1$ is at most~$1/\lowerBounds[1]=\smallO{\log m}$, so both
    in expectation and with high probability after $\smallO{m\log^2 m}[]$
    steps
    a local optimum with respect to the $1$-bit neighborhood
    has been reached and the current search point constitutes a (not necessarily minimum) spanning tree. If the constant $c$ from $\timeToNext[1]$ is chosen
    sufficiently large, then the failure counter
    $c_1$ does not exceed the threshold $\timeToNext[1]$ during the
    considered number of steps at rate~$1$
    with probability
    at least~$1-\smallO{1/m^2}$. Moreover, by a union bound, the probability
    that mutation rates higher than $2$ are
    ever chosen in the considered period is \[
    \bigO{m\log^2 m} \sum_{\ell=3}^m \lowerBounds[i] = \smallO{1/m^3},
    \]
    which we assume not to happen. Then no rates
    larger than~$2$ are in the archive $\preferredRates[t]$ at the end of the period.
    If rate~$2$ is not yet present at that time,
    then the global counter $\globalBound[t]$ exceeds
    its threshold after $\bigO{m\ln^2 m}[]$ steps~--
    and rate~$2$ enters the archive after another
    $\bigO{\timeToNext[1]/\lowerBounds[1]} = \bigO{m\ln^2 m}[]$ expected number of steps at rate~$1$,
    all of which are unsuccessful. Again, the probability of rates larger than~$2$ entering
    the archive during that time
    is $\smallO{1/m^3}$ by the same
    arguments as above.

    After the current search point constitutes a spanning tree for the first time, the final
    phase
    begins. Here only steps flipping an even number of bits
    can be accepted since disconnected graphs and graphs having more than $n-1$ edges lead to inferior fitness values.
     We now concentrate on $2$-bit flips and aim at using multiplicative drift analysis using
$g(x)=\sum_{i=1}^m x_ir_i$ as potential function, where $r_i$ is defined in the statement of the theorem.
As shown in \cite[Proof of Lemma~3]{RaidlKJ06},
RLS flipping one or two bits uniformly at random behaves stochastically identically on the original
fitness function~$f$ and the function~$g$, assuming
a spanning tree as starting point. Hence,
our plan is to choose a sufficiently large constant~$c'>0$ and to conduct the drift analysis conditioned
on that only rates~$1$ and~$2$ are chosen in a phase of predefined length~$c'(1-\lowerBounds[1])^{-1} m^2\ln m = \bigO{m^2 \ln^2 m}[]$, using $\lowerBounds[1]\le 1/2$. Note that in the phase, the
steps of rate~$1$ cannot be successful.
The probability of only choosing rates at most~$2$ throughout the phase is at least
\[
1-\bigO{m^2\ln^2 m} \sum_{\ell=3}^m \lowerBounds[i] =
1- \bigO{1/m^2} .
\]
In the following, we assume the phase to fulfill this property. We choose $c$ small enough to bound
the probability of the error that $\counts{\cdot}[2]$
exceeds its threshold $\timeToNext[2]$ by $\bigO{m^{-2}}$
and analyze the error case at the end of the proof.

Let $X^{(t)}\coloneqq g(x_t)-g(\xopt)$ for the current
search point~$x_t$ and an optimal search point $\xopt$.
Since \dsdRLS under the assumption of choosing only rates~$1$ and~$2$ behaves stochastically
the same on the original fitness function~$f$ and the potential
function~$g$, we obtain that $\E{X^{(t)} - X^{(t+1)}}[X^{(t)}][]
\ge X^{(t)}/\binom{m}{2}\ge 2X^{(t)}/m^2$
since the $g$-value can be
decreased by altogether
$g(x_t)-g(\xopt)$ via a sequence of at most
$\binom{m}{2}$ disjoint two-bit flips; see also the proof of
Theorem~15 in \cite{DoerrJohannsenWinzenALGO12} for the underlying
combinatorial argument. Let $T$ denote the number of steps at
strength~$2$
until $g$ is minimized, assuming no larger strength to occur.
Using the multiplicative drift theorem (\Cref{thm:multiplicativeDrift}) 
we have $\E{T}\le (m^2/2)(1+\ln(r_1+\dots+r_m)) \le
(m^2/2)(1+\ln(m^2))$ and by
the tail bounds for multiplicative drift (\eg, \cite{LenglerDriftBookChapter})
it holds that
$\Pr{T> (m^2/2)(\ln(m^2) + \ln(m^3))} \le \eulerE[-\ln(m^3)] = 1/m^3$. We choose a
phase length of $4(1-\lowerBounds[1])^{-1}m^2\ln m$, which includes at least~$m^2\ln m$ steps choosing
rate~$2$ with probability $1-\eulerE[-\bigOmega{m^2\ln m}]$
according to Chernoff bounds \cite[Theorem~1.10.5]{Doerr2020BookChapter}. Assuming this to happen,
we have bounded the probability of not finding the optimum in Phase~$2$ of that length is
$\smallO{m^{-2}}$.
The total probability of any error as described
above is still $\smallO{m^{-2}}$.

To bound the total expected optimization time,
we note
that we already have proved
an upper bound $\smallO{m^2\log m}$
on the expected time to reach a spanning tree
conditional on that no error occurs. If an error occurs, we wait for an accepted one-bit flip,
which happens after expected time $\bigO{m\log m}$
and repeat the argumentation. The expected number
of repetitions is $\smallO{1}$.

We now bound the expected time for the final phase. Hence, we assume a spanning tree as current search point.
If an error occurs, we have no real control over
the archive $\preferredRates[t]$. However, after an
expected number of at most $(1/\lowerBounds[2])m^2 = \bigO{m^4\ln m}$
steps, an accepted two-bit mutation adds rate~$2$
to the archive and progress can be made by flipping
two bits as above, with the caveat that the
stochastically equivalent behavior as on the
function~$g$ cannot be assumed any longer. Hence,
 we use a weaker bound
on the expected optimization time. More precisely,
 we replace the fitness function$f$ by the
 fitness function~$f'$ under which the $2^m$
 search points have the same ranking as on~$f$,
 but the maximum
 weight is $\bigO{m^m}$ \cite{ReichelSkutellaFOGA09}.
 Since the \flexEA is ranking based, it has the same
 stochastic behavior on~$f$ and~$f'$.
 On the process~$X_t'\coloneqq f'(x_t)$, using the classical
 analysis by \cite{NeumannW07},
 there is a multiplicative
 drift with parameter $\delta=\bigOmega{q/m^2}$, where
 $q$ is a lower bound on a step flipping two bits, while
 steps flipping more bits do not harm the analysis.
 Clearly, $q=\bigOmega{1/m}$ since no more than~$m$
 rates can be in the archive.
 Hence,
 we obtain an expected runtime of $\bigO{m\cdot m^2 \ln(m^m)} = \bigO{m^4 \ln m}$ in the failure case.
By the law of total probability, using
the failure probability $\smallO{m^{-2}}$, the
total expected runtime is still
$(1-\lowerBounds[1])^{-1}+\smallO{1})(m^2/2)(1+\ln(r_1+\dots+r_m))$.
\end{proofof}

\section*{A Function Where the Archive
Stores Several Rates Simultaneously (Section~\ref{sec:twoRates})}

We now give the full proofs of \Cref{lem:nocoupling} and \Cref{theo:twofreqcs} that were omitted from the main paper.

\newcommand{\poch}[2]{#1^{\underline{#2}}}

\begin{proofof}{\Cref{lem:nocoupling}}
To reach $a+d$ one-bits by flipping $d+2i$ bits, exactly $d+i$ zero-bits and $i$ one-bits must be flipped. Hence, we assume that $d+i\le n-a$ and $i\le a$. By elementary combinatorics and expanding the binomial coefficients, we have
\[
\frac{p(n,a,d,i)}{p(n,a,d,0)} =
\frac{\frac{\binom{n-a}{d+i}\binom{a}{i}}{\binom{n}{d+2i}}}{\frac{\binom{n-a}{d}}{\binom{n}{d}}}
= \frac{(d+2i)!}{i!(d+i)!} \cdot
\frac{\poch{(n-a)}{d+i}\cdot \poch{a}{i}}{\poch{n}{d+2i}} \cdot \frac{\poch{n}{d}}{\poch{(n-a)}{d}}
=
\frac{(d+2i)!}{i!(d+i)!} \cdot \frac{\poch{(n-a-d)}{i}\cdot \poch{a}{i}}{\poch{(n-d)}{2i}}
,
\]
where $\poch{b}{c} = b(b-1)\cdot \dots \cdot (b-c+1)$ is the falling factorial (sometimes
called \emph{Pochhammer} symbol).

We estimate the two fractions in the last expression separately and start with the second one. By our assumption that $d+2i\le n^{1/3}$, we have $\poch{(n-d)}{2i} \ge (n-d-2i)^{2i} \ge (n-n^{1/3})^{2i} =
(1-\smallO{1}) n^{2i}$ and therefore
\[
\frac{\poch{(n-a-d)}{i}\cdot \poch{a}{i}}{\poch{(n-d)}{2i}}
\le \frac{(n-a)^i n^i}{(1-\smallO{1}) n^{2i}}
\le (1+\smallO{1})\left(\frac{n-a}{n}\right)^{i}.
\]
For the first fraction, we have
\[
\frac{(d+2i)!}{i!(d+i)!} = \frac{\poch{(d+2i)}{i}}{i!} \le
\frac{(d+2i)^i}{(i/\eulerE)^i} = \left(\left(2+\frac{d}{i}\right)\cdot \eulerE\right)^i \le  \left(\eulerE^{1+d/i}\right)^i \eulerE^{i} = \eulerE^{d+2i},
\]
where the first inequality used the well-known estimate $i!\ge (i/\eulerE)^i$ following from Stirling's formula and the second inequality used
that $1+x\le \eulerE^x$ for all $x\in\R$.

Multiplying the two estimates gives the claim of the lemma.
\end{proofof}


\begin{proofof}{\Cref{theo:twofreqcs}}
We start with the upper bound for  the
\flexEA.

\paragraph{Reaching the first hurdle} If the initial search
point has less than $s\coloneqq (7/8)n-\sqrt{n}$
one-bits, then by \Cref{cor:runTimeOnShiftedOneMax}, the level of at least~$s$ one-bits is reached
within expected time $\bigO{n\log n/\lowerBounds[1]} = \bigO{n\log n}$ and rate~$1$ is in the archive with probability $1-\bigO{1/n^2}$ by choosing
the constant in
$\timeToNext[1]$ appropriately.
Moreover, by applying Markov's
inequality and repeating
independent phases, the
time to reach level at least~$s$ is  bounded by
$\bigO{n^4}$ with probability
$1-2^{-\smallOmega{n}}$.

We now consider the first time where
the current search point
has~$s$ or more one-bits and assume pessimistically exactly~$s$ one-bits.
The smallest feasible rate at this point is~$g$ and the probability of a success, \ie, increasing the number of one-bits by~$g$, at this rate is
$(1+\smallO{1})n^{-2}$
according to \eqref{eq:upper-hurdle}. Using
\Cref{lem:nocoupling},
 the probability of a success at rate~$g+2i$, where $i>0$, is bounded from above
by
\[
(1+\smallO{1})n^{-2}
\left(\left(2+\frac{g}{i}\right)\cdot \eulerE \cdot \left(\frac{1}{8} + \frac{1}{\sqrt{n}}\right)\right)^i
\le
(1+\smallO{1})n^{-2}
\left(\left(2+\frac{g}{i}\right)\cdot \eulerE \cdot \frac{1}{7.9} \right)^i
\]
using $a\ge s = 7n/8-\sqrt{n}$ and assuming $n$ large enough. The bracket attains its maximum at $i\approx 0.2868g$ (which can be verified by a standard computer algebra system) and is bounded from above by $1.89^{0.2868g} \le 2^{g/3} = n^{1/6}$  then. Altogether, the success probability for a hurdle of size~$g$ even at optimal rate is no more than $(1+\smallO{1}) n^{-11/6}$.

Therefore with probability $1-\smallO{1}$ by a union bound over $\globalBound =
\bigO{n\log n/\lowerBounds[1]}$ steps,
the global counter~$\globalCounter$ (not to be confused with the gap size)
will reach its threshold before
a hurdle is jumped over and the archive will
be reset to
contain rate~$1$ only. We assume
all this to happen.

\paragraph{Bounding the archive size.}
When the current search point has~$s$ one-bits, then successes
are only possible at rates~$2g$
or larger. The aim is to bound
the size of the archive while
the hurdles of width~$g$ and~$2g$ are overcome. We first rule out
that extremely large rates
yield a success and thereby enter the archive. Applying
a mutation flipping $d$ bits uniformly at random to a search point with $a\ge s$ one-bits, the
number of flipping
one-bits follows a hypergeometric distribution and the expected increase in one-bits equals
$ad/n - (n-a)d/n < 0$. Applying a Chernoff bound for the hypergeometric distribution (Theorem~1.10.25 in \cite{Doerr2020BookChapter}),
the probability
of increasing the number of one-bits by any positive amount when flipping a certain number~$d\ge n^{1/3}$ of bits in one step
is  bounded from above by
$
2^{-\bigOmega{n^{1/3}}[]}
$. By a union bound
over all $d\ge n^{1/3}$, the probability of increasing the number of one-bits using any rate at least~$n^{1/3}$ is
still $
2^{-\bigOmega{n^{1/3}}[]}
$.
 Hence,
we work under the assumption that rates larger than~$n^{1/3}$
are never successful in polynomial time; the error
probability is still $2^{-\bigOmega{n^{1/3}}[]}$ by
a union bound.

Clearly, the minimum successful rate is~$g$ while crossing the hurdles in the interval $[s .. (7/8)n]$ of one-bits. We now estimate the success
probabilities of rates
in $[g .. n^{1/3}]$.
By flipping $2g+i$ zero-bits and $i$ one-bits at rate~$2g+2i$, also larger
rates than~$2g$ can overcome hurdles
of size~$2g$. Using
\Cref{lem:nocoupling} using $d=2g$,
$i=c'\log n$ for
a sufficiently large
constant~$c'$, $a\ge s$ and our assumption
of rates at most~$n^{1/3}$,
we have that the probability of increasing the number of one-bits by $d$ at rate $d+2i$
is no larger than
a factor of
\[
(1+\smallO{1})
\eulerE[\log n] \eulerE[2i]
\left(\frac{1}{8}+\frac{1}{\sqrt{n}}\right)^{i} \le
(1+\smallO{1})
 \eulerE[\log n] 0.93^i
\]
larger than the probability
of doing this at rate $d$.
Hence,
for a sufficiently large constant~$c'>0$, rates larger
than~$2g+2c'\log n$ only
have a success probability of
$\bigO{n^{-5}}$. By a union bound,
the size of the archive is
bounded by $\bigO{\log n}$
within a period of $\bigTheta{n^{4.5}}$ steps with
probability $1-\bigO{n^{-1/2}}[]$.
We assume that the error
case of a larger archive does not happen either.

\paragraph{Crossing the hurdles
and reaching the optimum.}
In the following, under the
above assumptions,
we first show that
a phase of $c''n^{4.5}$  steps is
sufficient to cross all hurdles
by reaching at least~$7n/8$ one-bits
with high probability. No matter
whether there is a current
hurdle of width~$g$
or~$2g$ one-bits to the next
improvement, the
rate having the highest success probability (which is at least $(1-\smallO{1}) n^{-2}$ and $(1-\smallO{1}) n^{-4}$, respectively, but possibly larger
since it may be beneficial to flip more bits than the gap size to
overcome the gap), enters the archive after
$\bigO{\lowerBounds[2g] n^4}$
steps since it is sufficient to select the best rate and have a success (using that $\lowerBounds[g]\le \lowerBounds[2g]$). By Chernoff
bounds for geometrically distributed
random variables \cite[1.10.32]{Doerr2020BookChapter}, the time is
$\bigO{\lowerBounds[2g] n^4}$
with overwhelming probability.
Once both the optimal rate for gap~$g$ and for gap~$2g$
are in the archive, the probability
of choosing the rate fitting the
next hurdle is $\bigOmega{1/\log n}$
by our assumption on the archive
size and the time for an improvement
is $\bigO{n^4\log n}$ with
overwhelming probability and in
expectation (using that gap~$2g$
is the worst case). Since
at most~$\sqrt{n}/g = \bigO{\sqrt{n}/\log n}$ improvements
jumping hurdles are sufficient to
reach level $7n/8$ or larger
and the remaining part of the optimization is completed within
expected time $\bigO{(n\log n)/\lowerBounds[1]}$ in the
same way as described above, the
optimization time is $\bigO{n^{4.5}}$
in expectation and also with
high probability by Chernoff bounds for geometrically distributed random variables. In the error case of
a larger archive, we argue with the maximum archive size~$n$ and have a success probability of $\bigOmega{1/n^5}$. The expected time in the error case is therefore $\bigO{n^{5.5}/\log n}$. Finally, the
probability of exceeding a counter
is superpolynomially small since
$\timeToNext[g] \ge \binom{n}{\log n}$; if
this happens nevertheless, then we wait
for the optimum rates for gap sizes~$g$ and~$2g$
to enter the archive as above. By the law of total probability, this contributes a superpolynomially small extra term to the expected runtime.
Altogether,
the unconditional expected
optimization time of \flexEA
is $\bigO{n^{4.5}}$.

\paragraph{Lower Bounds for
stagnation detection and \fastea}
We are left with the bounds
for the other algorithms.
With overwhelming probability
$1-2^{-\bigOmega{n}}$, their first search point, which is
drawn uniformly, is before the first hurdle, \ie,
has at most $s$ one-bits. Hence, in the algorithms
\sdrlss and \sdrlsm, the rate must increase
from its initial value~$1$ to at
least~$g$ as
least once. With the
standard parameter choices, this
takes $\bigOmega{\sum_{i=1}^{g-1}\binom{n}{i}\ln R}[\big]=n^{\bigOmega{\log n}}$  steps.

The \fastea, under the assumption on the first
point, must increase the number of one-bits
 by a total of at least $\sqrt{n}$ to
cross all hurdles. We show that the algorithm does not cross
more than $h^*$ hurdles at once in a period of $\bigO{n^5}$ steps
with high probability, where $h^*$ is a sufficiently large constant. To cross
$h^*$ hurdles in one mutation, it is necessary that at least $cn$ zero-bits flip at once for
some constant~$c>0$.
By the results in \cite{DoerrLMNGECCO17}, for any $k\in [n/2]$, the probability of flipping
exactly $k$ bits is maximized at mutation probability $k/n$ (recall that the \fastea uses standard bit mutation).
Hence, the probability
of flipping $k\ge cn$ zero-bits and thereby crossing at least $h^*$ hurdles at once is at most
\[
\binom{n-s}{k} \left(\frac{k}{n}\right)^{k} \le
\left(\frac{(n-s)e}{k}\right)^{k} \left(\frac{k}{n}\right)^{k} \le \left(\frac{1}{8}+\frac{1}{\sqrt{n}}\right)^{k}
\eulerE[k] \le \left(\frac{\eulerE}{8}+\frac{\eulerE}{\sqrt{n}}\right)^{cg}=  \bigO{n^{-6}}[\big]
\]
if $h^*$, and thereby~$c$, are large enough.
By a union bound over all $k\ge cn$, the probability of crossing at least~$h^*$ hurdles at once is still
$\bigO{n^{-5}}$.
In the following, we assume that this does not happen
in $\bigO{n^{4.6}}$ steps.


Under our assumptions, to reach the optimum,  at least $\sqrt{n}/(cg)$
increases of the number of one-bits
by at least $cg$  in search
points at level~$s$ or larger are necessary, where
$c$ may be a large constant. We pessimistically
(for the perspective of a lower bound) assume that the
heavy-tailed mutation operator always flips a number of
bits $2g+2i$ that maximizes the probability of overcoming
a hurdle of width~$2g$ (and in the same way for the
hurdles of width~$g$).
As mentioned above, the
\fastea
flips $2g+2i$ bits with the
highest probability at mutation probability
$(2g+2i)/n$.
Hence, the probability of crossing
a hurdle of width~$2g$ is at most
\begin{align*}
& \binom{n-s}{2g+i} \binom{s}{i}
\left(\frac{2g+2i}{n}\right)^{2g+2i}
\left(1-\frac{2g+2i}{n}\right)^{n-2g-2i}
= p(n,s,2g,i) \cdot \left(\frac{2g+2i}{n}\right)^{2g+2i}\left(1-\frac{2g+2i}{n}\right)^{n-2g-2i}  \binom{n}{2g+2i}.
%
\end{align*}
Inspecting the product following $p(n,s,2g,i)$, which is the ratio of success probabilities
of the \fastea and
 of the \flexEA,
we have
\[
\left(\frac{2g+2i}{n}\right)^{2g+2i}\left(1-\frac{2g+2i}{n}\right)^{n-2g-2i}  \binom{n}{2g+2i} \le
\left(\frac{2g+2i}{n}\right)^{2g+2i} {\eulerE[-2g-2i+\smallO{1}]}  \frac{n^{2g+2i}}{(2g+2i)!},
\]
where the $\smallO{1}$ comes
from the assumption of $2g+2i\le n^{1/3}$.
Using Stirling's formula, we have $(2g+2i)! \ge \frac{1}{\sqrt{2\pi}} \sqrt{2g+2i}\cdot \eulerE^{-2g-2i} (2g+2i)^{2g+2i}$. Plugging this
into the previous formula, the bound on the ratio of success probabilities simplifies to
\[
(1+\smallO{1}) \sqrt{2\pi} \frac{1}{ \sqrt{2g+2i}},
\]
so the probability of the \fastea crossing a hurdle is
by a factor of $\bigOmega{\sqrt{2g+2i}} = \bigOmega{\sqrt{\log n}}$ smaller than when the
\flexEA has chosen the optimum radius.

Finally, we analyze the probability
of the \fastea choosing a successful rate
in the heavy-tailed mutation operator.
If its rate is at most $c'\log n$ for
a constant~$c'>0$, the
probability of crossing a hurdle of width~$g$ is at most
\[
\binom{n-s}{g}\left(\frac{c\ln n}{n}\right)^g \le \left(\frac{c'\ln n}{\ln n}\right)^{\ln n} = \bigO{1/n^5}
\]
choosing~$c'$ small enough. Hence, by
a union bound, no hurdle is crossed
at rate less than $c'\log n$ in a period of
$\bigOmega{n^{4.5}\sqrt{\ln n}}[\big]$ steps,
which we assume to happen.
On the other hand, the probability of the heavy-tailed mutation
choosing rate at least $c'\ln n$ is
at most $\bigO{{(1/\ln n)^{\beta}}}[\big]$.
This probability is by a factor $\bigO{{(1/\ln n)^{\beta-1}}}[\big]$ smaller than $1/\log n$, the
asymptotic worst-case probability
of the \flexEA choosing the best radius
from an archive of $\bigO{\log n}$ bits. Since both algorithms have to overcome
$\bigTheta{\sqrt{n}/\log n}$ hurdles with high probability and the success
probability of the \flexEA for crossing a hurdle is bigger by a factor
of $\bigOmega{\sqrt{\log n}}[\big]$,
we obtained the claimed asymptotic speed-up
of $\bigOmega{(\ln n)^{\beta-1/2}}[\big]$ for the \flexEA compared to the \fastea.
\end{proofof}

\end{document}